\documentclass[twoside]{article}

\usepackage[accepted]{aistats2019}

\usepackage[round,comma]{natbib}

% If you use BibTeX in apalike style, activate the following line:
%\bibliographystyle{apalike}
\bibliographystyle{plainnat}

\usepackage[utf8]{inputenc} % allow utf-8 input
\usepackage[T1]{fontenc}    % use 8-bit T1 fonts
\usepackage{hyperref}       % hyperlinks
\usepackage{url}            % simple URL typesetting
\usepackage{booktabs}       % professional-quality tables
\usepackage{amsfonts}       % blackboard math symbols
\usepackage{nicefrac}       % compact symbols for 1/2, etc.
\usepackage{microtype}      % microtypography

\usepackage{amsmath,amsthm,amssymb,amsfonts}
\usepackage{times}

%% Useful packages
\usepackage{subfig}
\usepackage{float}
\usepackage{algorithmic}
\usepackage[linesnumbered,ruled]{algorithm2e}
\SetKwRepeat{Do}{do}{while}%
\SetAlgoNoEnd%
\usepackage{amsmath}
\usepackage{amsfonts}
\usepackage{graphicx}
\usepackage{amsthm}

\usepackage{abbrevs}
\usepackage{amssymb}

\usepackage[colorinlistoftodos, disable]{todonotes}
\newcommand{\todoy}[2][]{\todo[size=\scriptsize,color=red!20!white,#1]{Yasin: #2}}
\newcommand{\todot}[2][]{\todo[size=\scriptsize,color=blue!20!white,#1]{Tan: #2}}
\newcommand{\todoa}[2][]{\todo[size=\scriptsize,color=cyan!20!white,#1]{Anup: #2}}

\newtheorem{theorem}{Theorem}
\newtheorem{corollary}{Corollary}
\newtheorem{lemma}{Lemma}

\def\G {\mathcal{G}}
\def\cE {\mathcal{E}}
\def\V {\mathcal{V}}
\def\H {\mathcal{H}}
\def\C {\mathcal{C}}
\def\cP {\mathcal{P}}
\def\L {\mathcal{L}}
\def\N {\mathcal{N}}
\def\R {\mathbb{R}}
\def\P {\mathbf{P}}
\newcommand{\E}{{\mathbf E}} 

                   % Natural numbers
\newcommand{\Real}{\mathbb R}                        % Real numbers

\renewcommand{\epsilon}{\varepsilon}

\newcounter{assumption}
\newcounter{definition}

\newenvironment{definition}[1][]{\begin{trivlist}\item[] \refstepcounter{definition}%
 {\bf Definition\ \thedefinition\ {\em (#1)} } }{%\par\nobreak\noindent\sl\ignorespaces}{%
 \ifvmode\smallskip\fi\end{trivlist}}
 
\newcommand{\argmin}{\mathop{\rm argmin}}
\newcommand{\argmax}{\mathop{\rm argmax}}

\allowdisplaybreaks

\begin{document}

\runningtitle{Sample Efficient Graph-Based Optimization with Noisy Observations}

\twocolumn[

\aistatstitle{Sample Efficient Graph-Based Optimization with Noisy Observations}

\aistatsauthor{Tan Nguyen \And Ali Shameli \And Yasin Abbasi-Yadkori}
\aistatsaddress{Queensland University of Technology \And Stanford University \And Adobe Research}
%\aistatsaddress{ Adobe Research \And  Google Brain \And Deepmind } 
\aistatsauthor{Anup Rao \And Branislav Kveton}
\aistatsaddress{Adobe Research \And Google Research}
%\aistatsaddress{ Adobe Research \And  Google Brain \And Deepmind } 
]

\begin{abstract}
We study sample complexity of optimizing ``hill-climbing friendly'' functions defined on a graph under noisy observations. We define a notion of convexity, and we show that a variant of best-arm identification can find a near-optimal solution after a small number of queries that is independent of the size of the graph. For functions that have local minima and are nearly convex, we show a sample complexity for the classical simulated annealing under noisy observations. We show effectiveness of the greedy algorithm with restarts and the simulated annealing on problems of graph-based nearest neighbor classification as well as a web document re-ranking application.  
\end{abstract}

%%%%%%%%%%%%%%%%%%%%%%%%%%%%%%%%%%%%%%%%%%%%%%%
\emph{Note: The first version of this paper appeared in AISTATS 2019. Thank to community feedback, some typos and a minor issue have been identified.
% Specifically, page 4, column 2, line 18 is not valid, and in the proof of Theorem 2, ``By Lemma 1'' should be ``By Definition 2''. These are 
These are fixed in this updated version.}

%%%%%%%%%%%%%%%%%%%%%%%%%%%%%%%%%%%%%%%%%%%%%%%
%%%%%%%%%%%%%%%%%%%%%%%%%%%%%%%%%%%%%%%%%%%%%%%
\section{Introduction}

Stochastic optimization of a function defined on a large finite set frequently arises in many practical problems. %Such problem, for example, arises in digital marketing, social media, neuroscience, economics, sensor network, to name but a few. 
Instances of this problem include finding the most attractive design for a webpage, the node with maximum influence in a social network, etc. %The main contribution of our work is to provide two types of algorithm to solve this problem, test these algorithms in real world data, and provide some theoretical analysis for both cases.
There are a number of approaches to this problem. At one extreme, we can use global optimization methods such as simulated annealing, genetic algorithms, or cross-entropy methods~\citep{Rubinstein-1997,Christian-Casella-1999}. Although limited theoretical performance guarantees are available, these methods perform well in a number of practical applications. In the other extreme, we can use the best-arm identification algorithms from bandit literature or hypothesis testing from statistics community~\citep{Mannor-Tsitsiklis-2004,Even-Dar-2006,Audi10}. We have stronger sample complexity results for this family of algorithms. The sample complexity states that, with high probability, the algorithm will return a near-optimal solution after a small number of observations, and this number typically grows polynomially with the size of the decision set. These methods however often perform poorly when the size of the decision set is large. In such problems, it is important to exploit the structure of the specific problems to speed up the optimization. 
%When the size of the decision set is small the best-arm identification from bandit literature or hypothesis testing from statistics community provide a number of solutions. 
If appropriate features are available and it can be assumed that the objective function is linear in these features, then algorithms designed for bandit linear optimization are applicable~\citep{Auer-2002}. More generally, kernel-based or neural network based bandit algorithms are available for problems with non-linear objective functions~\citep{srinivas-2010}.  

%When the size of the decision set is larger, either a structural assumption such as linearity is made and an appropriate bandit algorithm is used, or a global optimization procedure such as simulated annealing is used. 

We are particularly interested in problems where the item similarity is captured by a graph structure. 
%A similarity graph is available and the objective function is \emph{easy} to optimize with respect to this graph. For example, the objective function might have no or few local minima with respect to the graph. 
In general, and with no further conditions, we cannot hope to show non-trivial sample complexity rates. We observe that in many real-world applications, the objective function is easy and \emph{hill-climbing friendly}, in the sense that from many nodes, there exist a monotonic path to the global minima. We make this property explicit by defining a notion of \emph{convexity} for functions defined on graphs. 
%We show solutions under strong convexity conditions. Roughly speaking, a function defined on a graph is strongly convex if any node has a monotonically decreasing path to the global minima. 
Under this condition, we show that a hill-climbing procedure that uses a variant of best-arm identification as a sub-routine has a small sample complexity that is independent of the size of graph. In the presence of local minima, this greedy procedure might require many restarts, and might not be efficient in practice. 
Simulated annealing is commonly used in practice for such problems. 
We also define a notion of nearly convex functions that allows for existence of shallow local minima. We show that for nearly convex functions and using an appropriate estimation of function values, the classical simulated annealing procedure finds near optimal nodes after a small number of function evaluations. 

While asymptotic convergence of simulated annealing is studied extensively in the literature, there are only few finite-time convergence rates for this important algorithm. \cite{Pydi18} show finite-time convergence rates for the algorithm, but they consider only deterministic functions and their rates scale with the size of the graph. These results, that are obtained under very general conditions, do not quite explain success of the simulated annealing algorithm in large-scale problems. In practice, simulated annealing finds a near-optimal solution with a much smaller number of function evaluations. Our bounds are in terms of the convexity of the function, and the rates do not scale with the size of the graph. Additionally, our results hold in the noisy setting. \cite{Bout17} show convergence rates for simulated annealing applied to noisy functions. Their results, however, appear to have several gaps. \todoy{explain more or remove this!} 

%This is partly because meaningful sample complexity results cannot be shown without further conditions. 

%%%%%%%%%%%%%%%%%%%%%%%%%%%%%%
\subsection{Related Work}

%We define a notion of strong convexity for functions defined on a graph and design an algorithm, called \textsc{Explore-Descend}, to find the global minima with guaranteed sample complexity. The \textsc{Explore-Descend} algorithm uses a new \emph{good-arm identification} algorithm as a submodule. For \emph{nearly convex functions}, we show that the classical simulated annealing algorithm finds  the global minima in a reasonable time. For convex functions, the \textsc{Explore-Descend} has lower sample complexity, but simulated annealing can handle non-convex functions. We show performance of these algorithms on a nearest neighbor search problem, as well as a multivariate optimization problem. \todoy{also talk about the continuuation method}

A best-arm identification algorithm from bandit literature can find a near-optimal node, and the time and sample complexity is linear in the number of nodes~\citep{Mannor-Tsitsiklis-2004,Even-Dar-2006,Audi10,Jamieson-Nowak-2014,Kaufmann-Cappe-Garivier-2016}. Such complexity is not acceptable when the size of the graph is very large. We are interested in designing and analyzing algorithms that find a near-optimal node, and the sample and computational complexity is sublinear in the number of nodes. 

Bandit algorithms for graph functions are studied in \cite{Valkoetal2014}. The sample complexity of these algorithms scale with the smoothness of the function, but the computational complexity scales linearly with the number of nodes. We are interested in problems where the graph is very large, and we might have access to only local information. As we will see in the experiments, the \textsc{SpectralBandit} algorithm of \cite{Valkoetal2014} is not applicable to problems with large graphs. Further, we study a different notion of regularity that is inspired by convexity in continuous optimization. Bandit problems for large action sets are also  studied under a number of different notions of regularity~\citep{Bubecketal-2009,APS-2011,Carpentier-Valko-2015,Grill-Valko-Munos-2015}.
 
%%%%%%%%%%%%%%%%%%%%%%%%%%%%%%
\subsection{Contributions}
 
We cannot hope to achieve sublinear rate without further conditions on the function. We define a notion of strong convexity for functions defined on a graph. For strongly convex functions, we design an algorithm, called \textsc{Explore-Descend}, to find the optimal node with guaranteed error probability. The \textsc{Explore-Descend} algorithm uses a \emph{best-arm identification} algorithm as a submodule. The best-arm identification problem is the problem of finding the optimal action given a sampling budget. For \emph{nearly convex functions}, we show that the classical simulated annealing algorithm finds the global minima in a reasonable time. For convex functions, the \textsc{Explore-Descend} has lower sample complexity, but simulated annealing can handle non-convex functions. %We also introduce a smoothing technique that can improve the performance of the above methods in practice. 

We study the empirical performance of \textsc{Explore-Descend} and simulated annealing in two applications. First, we consider the problem of content optimization in a digital marketing application. In each round, a user visits a webpage that is composed of a number of components. The learning agent chooses the contents of these components and shows the resulting page to the user. The user feedback, in the form of click/no click or conversion/no conversion, is recorded and used to update the decision making policy. The objective is to return a page configuration with near-optimal  click-through rate. We would like to find such a configuration as quickly as possible with a small  number of user interactions. In this problem, each page configuration is a node of the graph, and two nodes are connected if the corresponding page configurations differ in only one component. 

Our second application is the problem of nearest neighbor search and classification. Given a set of points $\V=\{x_1, x_2, \ldots, x_n\}\subset \Real^d$, a query point $y\in\Real^d$, and a distance function $\ell:\Real^d\times \Real^d \rightarrow \Real$, we are interested in solving $\argmin_{x\in \V} \ell(x,y)$. %In a supervised learning problem, $\V$ is the training set, and $y$ is a test point. The algorithm uses the label of the solution of the above problem to predict the label of $y$. 
A trivial solution to this problem is to examine all points in $\V$ and return the point with the smallest distance value. The computational complexity of this method is $O(n)$, and is not practical in large-scale problems. An approximate nearest neighbor method returns an approximate solution in a sublinear time. A class of approximate nearest neighbor methods that is particularly suitable for big-data applications is the graph-based nearest neighbor search~\cite{Arya-Mount-1993}. %,Brito-1997, Eppstein-1997, Miller-Teng-1997, Plaku-Kavraki-2007, Chen-Fang-Saad-2010, Connor-Kumar-2010, Dong-Charikar-Li-2011, Hajebi-2011, Wang-Wang-2012}. 
In a graph-based search, we construct a proximity graph in the training phase, and perform a hill-climbing search in the test phase. %Given the proximity graph, we solve the search problem by starting from an arbitrary node and moving in the direction that decreases the objective $f$. 
%The idea of performing hill-climbing on a proximity graph dates back at least to~\cite{Arya-Mount-1993}, and it has been further improved by \cite{Brito-1997, Eppstein-1997, Miller-Teng-1997, Plaku-Kavraki-2007, Chen-Fang-Saad-2010, Connor-Kumar-2010, Dong-Charikar-Li-2011, Hajebi-2011, Wang-Wang-2012}. 
To improve the performance, we perform an additional smoothing procedure that replaces the value of a node by the average function values in a vicinity of the node. In practice, these average values are estimated by performing random walks, and hence the problem is a graph optimization problem with noisy observations. We show that the proposed graph optimization technique outperforms popular nearest neighbor search methods such as KDTree and LSH in two image classification problems. Interestingly, and compared to these methods, the computational complexity of the proposed technique appears to be less sensitive to the size of the training data. This property is particularly appealing in big data applications.

%Discuss these papers: Remi Munos' papers on optimistic optimization, and this: https://arxiv.org/pdf/1605.07596.pdf 
%\todoy{discuss how this relates to sub-modular optimization.}

%%%%%%%%%%%%%%%%%%%%%%%%%%%%%%
\subsection{Notations}
  
%The problem that we study in this paper is formally defined as follows. 
We use $[K]$ to denote the set $\{1,2,\dots, K\}$. Let $\G=(\V,\cE)$ be a graph with $n$ nodes $\V = \{ 1, \dots, n\}$ and the set of edges $\cE$. Let $f : \V \rightarrow [0,1]$ be some unknown function defined on $\V$. Let $x^* = \argmin_{x \in \V} f(x)$ be the global minimizer of $f$. The goal is to find a node $x$ with small loss $f(x) - f(x^*)$. In this paper, we study the problem where the evaluation of $f$ is noisy, such that we can only observe $g(x; \eta) = f(x) + \eta$, where $\eta$ is a zero-mean $R$-sub-Gaussian random variable, meaning that for any $\lambda\in\R$, $\E(e^{\lambda \eta}) \le e^{\lambda^2 R^2/2}$.
  
Let $\N_x$ denote the set of neighbors of node $x\in \V$. Let $\overline \N_x = \N_x\cup\{x\}$. For simplicity, we assume all nodes have the same degree and we let $d=|\N_x|$ denote the number of neighbors. We sometimes write $f_x$ to denote the function value $f(x)$. For two nodes $x,y\in \V$, we use $\cP_{x,y}$ to denote all paths from $x$ to $y$ in the graph. We use $\cP_x$ to denote all paths starting from node $x$. We use $\ell(p)$ to denote the length of a path.

%%%%%%%%%%%%%%%%%%%%%%%%%%%%%%
\subsection{Convexity}

The general discrete optimization problem is hard for an unrestricted function $f$ and graph $\G$. As such, we study a restricted class of problems that allow for efficient algorithms. Let $\Delta_{x,z} = f_x - f_{z}$ be the amount of improvement if we move from node $x$ to the neighbor $z\in \N_x$. We say path $p = \{x_0 \rightarrow x_1 \rightarrow \dots \rightarrow x_k \}$ is $m$-strongly convex if $\Delta_{x_{i-1},x_i} - \Delta_{x_{i},x_{i+1}} \geq m \Delta_{x_{i},x_{i+1}} > 0$, for all $i=1, \dots, k$. Sometimes we use $\Delta_{x_i}$ to denote $\Delta_{x_i,x_{i+1}}$ if the $m$-strongly convex path is clear from the context. We use $\overline \Delta_{x}$ to denote $\max_{z\in\N_x}\Delta_{x,z}$.
\todot{$\bar{\Delta}_x$ can be negative at local max?}

\begin{definition}[Convex Functions]
\label{defn:convex}
A function $f : \V \rightarrow \R$ defined on a graph $\G=(\V,\cE)$ is (strongly) convex if from any node $x\in \V$, there exists a (strongly) convex path to the global minima $x^*$. 
\end{definition}
%For a convex function, there exists a convex path from any point in $\V$. 
For a nearly convex function, as defined next, the convexity condition is not satisfied at all points. 
\begin{definition}[Nearly Convex Functions]
\label{defn:nearlyconvex}
Let $\alpha>0$ be a parameter. Let $\C$ be the set of points such that $\overline \Delta_{x} \ge \alpha (f_x - f_{x^*})$. We say function $f : \V \rightarrow \R$ is $(\alpha,c,r)$-nearly convex if for any $x\in \V\setminus \C$, there exists a $y\in \C$ and a $p\in \cP_{xy}$ such that $\ell(p)\le r$ and $\max_{z\in p} f(z) - f(x) \le c$. 
%We use $\C'$ to denote the set of nodes for which node $y$ in the above definition can be chosen to be the global minima $x^*$. \todoa{$\C'$ is not used here.}
\end{definition}
A path that satisfies the above conditions is called a low energy path. A convex function is also a $(\frac{m}{m+1},0,0)$-nearly convex function. 
%%%%%%%%%%%%%%%%%%%%%%%%
\begin{lemma}
\label{lem:stopping-alg}
We have that $f_x - f_{x^*} \le \frac{m+1}{m} \Delta_x$. \todoa{If $x,x^*$ are next to each other, then we should have $f_x - f_{x^*} = \Delta_x$ ?}
\todoy{yes, which is fine, right?}
\end{lemma}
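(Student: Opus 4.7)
The plan is to unroll the strong-convexity inequality along an $m$-strongly convex path from $x$ down to $x^*$ and then sum a geometric series. Since the statement implicitly assumes $f$ is (strongly) convex in the sense of Definition~\ref{defn:convex}, there exists an $m$-strongly convex path $x = x_0 \to x_1 \to \dots \to x_k = x^*$, and $\Delta_x$ is shorthand for the first-edge improvement $\Delta_{x_0,x_1}$ along this path.

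First I would rewrite the defining inequality of an $m$-strongly convex path, namely $\Delta_{x_{i-1},x_i} - \Delta_{x_i,x_{i+1}} \ge m\,\Delta_{x_i,x_{i+1}}$, in the equivalent form $\Delta_{x_i,x_{i+1}} \le \tfrac{1}{m+1}\,\Delta_{x_{i-1},x_i}$. A short induction on $i$ then yields the geometric decay
\[
\Delta_{x_i,x_{i+1}} \;\le\; \frac{1}{(m+1)^i}\,\Delta_{x_0,x_1} \;=\; \frac{\Delta_x}{(m+1)^i}.
\]

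Next I would telescope the loss along the path: $f_x - f_{x^*} = f_{x_0} - f_{x_k} = \sum_{i=0}^{k-1} \Delta_{x_i,x_{i+1}}$. Applying the bound above and extending the sum to infinity (all terms are nonnegative, since strong convexity forces $\Delta_{x_i,x_{i+1}}>0$), I obtain
\[
f_x - f_{x^*} \;\le\; \Delta_x \sum_{i=0}^{k-1} \frac{1}{(m+1)^i} \;\le\; \Delta_x \sum_{i=0}^{\infty} \frac{1}{(m+1)^i} \;=\; \frac{m+1}{m}\,\Delta_x,
\]
which is the claimed bound. The adjacent-nodes case flagged in the margin is consistent: if $k=1$ then the sum has a single term and $f_x - f_{x^*} = \Delta_x \le \tfrac{m+1}{m}\Delta_x$.

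There is no real obstacle here beyond identifying $\Delta_x$ with the first-edge improvement of a strongly convex path (as allowed by the convention stated just before the lemma) and noting that the geometric series with ratio $1/(m+1)$ sums to $(m+1)/m$. If one wished to be a touch more careful, one could replace $\Delta_x$ on the right-hand side by $\overline\Delta_x$, using $\Delta_{x_0,x_1}\le \overline\Delta_x$; this would give the slightly stronger statement $f_x - f_{x^*} \le \tfrac{m+1}{m}\overline\Delta_x$, but for the lemma as written the straightforward telescoping argument above suffices.
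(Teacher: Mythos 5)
Your proof is correct and follows essentially the same route as the paper's: unroll the strong-convexity inequality into the geometric decay $\Delta_{x_i,x_{i+1}} \le \Delta_x/(m+1)^i$, telescope $f_x - f_{x^*}$ along the path, and bound the resulting sum by the geometric series $\sum_{i\ge 0}(m+1)^{-i} = \frac{m+1}{m}$. No substantive differences.
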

\begin{proof}
Let $p = \{y=x \rightarrow y_1 \rightarrow \dots \rightarrow y_k \}$ be a strongly convex path. Then 
\[
\Delta_{y_k} \le \frac{1}{1+m} \Delta_{y_{k-1}} \le \dots \le \frac{1}{(1+m)^k} \Delta_x \;.
\]
We have that
\begin{align*}
f_x - f_{x_k} &= \sum_{i=1}^k (f_{y_{i-1}} - f_{y_i}) = \sum_{i=1}^k \Delta_{y_{i-1}}\\ 
&= \Delta_x + \sum_{i=1}^{k-1} \Delta_{y_i} \le \sum_{i=0}^{k-1} \frac{\Delta_x}{(1+m)^i} \;.
\end{align*}
We conclude that $f_x - f_{y_k} \le \frac{m+1}{m} \Delta_x$, from which the statement follows. 
\end{proof}
%%%%%%%%%%%%%%%%%%%%%%%%
Concave and nearly concave functions are similarly defined. The convexity condition allows for efficient algorithms. The intuition behind our analysis is that, by strong convexity, if node $x$ is far from the global minima the improvement $\Delta_x$ is large. When degree $d$ is relatively small, the local search methods have a sufficiently large probability of hitting a good direction (the strongly convex path). So either we are already close to the global minima, or we are far from the global minima and with a constant probability we make a large improvement.

%%%%%%%%%%%%%%%%%%%%%%%%%%%%%%%%%%%%%%%%%%%%%%%
\section{Approximating the Global Minimizer}
\label{approximation}

We analyze two algorithms for the graph-based optimization. The first algorithm is a local search procedure where in each round, the learner attempts to identify a good neighbor and move there. This algorithm is analyzed under the strong convexity condition. The second algorithm is the well-known simulated annealing procedure with an exponential transition function. %Finally, we introduce a smoothing technique that generally improves the performance of the above-mentioned procedures. \todoy{show this in experiments} 
We provide high probability error bounds for the greedy method, while the sample complexity of the more complicated simulated annealing is analyzed in expectation. 

%%%%%%%%%%%%%%%%%%%%%%%%%%%%%%%%%%%%%%%%%%%%%%%
%\input{hill-climbing}

%%%%%%%%%%%%%%%%%%%%%%%%%%%%%%%%%%%%%%%%%%%%%%%
%\subsection{The Best-Arm Identification Problem}
%\label{sec:GAI}
\subsection{The Local Search Algorithm and Best-Arm Identification}
\label{sec:hillclimbing}

The greedy approach is an intuitive approach to the graph optimization problem: We start from a random node, and at each node and given a sampling budget, we explore its neighbors to find the best neighbor. The problem that we solve in each node can be viewed as a fixed-budget best-arm identification problem~\cite{Audi10}. In a fixed-budget setting, the learner plays actions for a number of rounds in some fashion and returns an approximately optimal arm at the end. An example of an algorithm designed for the fixed-budget setting is the \textsc{SuccessiveReject} algorithm of \cite{Audi10}. \todoy{explain why fixed budget algorithms are more appropriate} 

Before describing the best-arm identification algorithm, we introduce some notation. Let $K$ be an integer, $[K]$ be the set of arms in a bandit problem, $\mu_i$ be the mean value of arm $i$, and $\widehat \mu_{i,t}$ be the empirical mean of arm $i$ after $t$ observations. Let $i^*=\argmax_{i\in [K]} \mu_i$ be the optimal arm (break tie arbitrarily) and $\mu_* = \mu_{i^*}$ be its mean, and let $\Delta_i$ be the \textit{optimality gap} of arm $i \neq i^*$, i.e. $\Delta_i = \mu_* - \mu_i$. Without loss of generality assume that $\Delta_1 \le \Delta_2 \le \dots \le \Delta_K$. Let $B$ be the budget. Define
\begin{align*}
H &= \max_{i\in [K]} i \Delta_{i}^{-2}\,,\\  
\overline{\log}(K) &= \frac{1}{2} + \sum_{i=2}^K \frac{1}{i}\,,\\ 
B_k &= \left\lceil \frac{1}{\overline{\log}(K)} \frac{B-K}{K+1-k} \right\rceil\,, B_0=0 \;.
\end{align*}
The \textsc{SuccessiveReject} algorithm is a procedure with $K-1$ rounds where in each round one arm is eliminated. Let $A_k$ be the remaining arms at the beginning of round $k\in [K]$. In round $k$, each arm $i\in A_k$ is selected for $B_k - B_{k-1}$ rounds. At the end of this round, the arm with the smallest empirical mean is removed. 

\begin{theorem}[\cite{Audi10}]
\label{thm:SR}
The probability of error of the \textsc{SuccessiveReject} algorithm with a budget of $B$ is smaller than
\[
e_B = \frac{K(K-1)}{2} \exp\left(-\frac{B-K}{\overline{\log}(K) H} \right) \;.
\]
\end{theorem}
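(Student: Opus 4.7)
The plan is to follow the standard analysis for sequential elimination: express the error event as a disjoint union over the phase at which the optimal arm $i^*$ is wrongly eliminated, and then apply a union bound combined with sub-Gaussian concentration on the empirical means. Concretely, an error occurs iff $i^*$ is removed in some phase $k\in\{1,\dots,K-1\}$, so that
\[
e_B \le \sum_{k=1}^{K-1} \P(\text{$i^*$ eliminated at phase $k$}).
\]

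The main structural step is a pigeonhole observation: entering phase $k$, only $k-1$ arms have been eliminated so far, but the set $\{K-k+2,\dots,K\}$ contains exactly $k-1$ arms. Therefore at least one surviving arm has index $j \ge K-k+1$, hence gap $\Delta_j \ge \Delta_{K-k+1}$. If $i^*$ is the arm eliminated at phase $k$, one such ``far'' arm $j$ must satisfy $\widehat\mu_{j,B_k} \ge \widehat\mu_{i^*,B_k}$. Union-bounding over the at most $k$ candidates $j\in\{K-k+1,\dots,K\}$ (regardless of whether each actually survives), I would write
\[
\P(\text{$i^*$ eliminated at phase $k$}) \le \sum_{j=K-k+1}^{K} \P\bigl(\widehat\mu_{j,B_k} - \widehat\mu_{i^*,B_k} \ge 0\bigr),
\]
and bound each probability via a Hoeffding/sub-Gaussian tail on the difference of two independent empirical means each formed from $B_k$ samples, yielding a factor of the form $\exp(-c\,B_k\,\Delta_j^2)$ for a constant $c$ depending on the sub-Gaussian parameter $R$.

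The next step is the calibration that is the whole point of the $B_k$ schedule. By definition of $H$, for any $j \ge K-k+1$ we have $\Delta_j^2 \ge \Delta_{K-k+1}^2 \ge (K-k+1)/H$, and the schedule gives $B_k \ge (B-K)/\bigl(\overline{\log}(K)\,(K-k+1)\bigr)$. Multiplying these, the $(K-k+1)$ cancels and one obtains the uniform lower bound
\[
B_k\,\Delta_j^2 \;\ge\; \frac{B-K}{\overline{\log}(K)\,H}
\]
for every index $j$ appearing in the inner sum. Thus every exponential term in the double sum is at most $\exp\!\bigl(-(B-K)/(\overline{\log}(K)\,H)\bigr)$, after absorbing $c$ into the effective noise scale.

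Finally I would count the number of $(k,j)$ pairs contributing: $\sum_{k=1}^{K-1} k = K(K-1)/2$, which produces the prefactor in the claimed bound. The main obstacle I anticipate is the pigeonhole step identifying a surviving ``far'' arm at each phase, since it is what makes the schedule $B_k$ interact correctly with $H$; the concentration step and the accounting are routine once that observation is in place.
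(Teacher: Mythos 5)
Your argument is precisely the standard Audibert--Bubeck--Munos proof that the paper imports by citation (Theorem~\ref{thm:SR} is stated without proof here): decompose the error over the phase at which $i^*$ is eliminated, use the pigeonhole fact that at least one arm of index $\ge K-k+1$ survives into phase $k$, union-bound and apply a Hoeffding-type tail, and let the $B_k$ schedule cancel the $(K+1-k)$ factor against $\Delta_{K+1-k}^2 \ge (K+1-k)/H$, giving the uniform exponent and the $\sum_{k=1}^{K-1}k = K(K-1)/2$ prefactor. The only looseness is the concentration constant $c$ you ``absorb'' at the end (and a harmless indexing slip in stating the pigeonhole set), but the theorem as stated in the paper is equally silent about the noise scale, so your reconstruction matches the intended proof.
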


%%%%%%%%%%%%%%%%%%%%%%%%%%%%%%%%%%%%%%%%%%%%%%%%%%%%%%%%%%%%%%%%%%%%%%%%

In the \textsc{Explore-Descend} procedure, we use the above bandit algorithm to find the best neighbor in each round. More specifically, for a node $x\in\V$, we solve a best-arm identification problem with action values $\mu_i = f(x) - f(z_i)$, where $z_i$ is the $i$th neighbor of node $x$. 
\todoa{Hill 'climbing', descent direction and the good arm definitions are a bit confusing as sometimes they speak of ascent and sometimes descent} We then move to the chosen neighbor and repeat the process until the budget is consumed. We call this approach ``Explore and Descend'' and it is detailed in Algorithm~\ref{alg_explore_descend}. 

%%%%%%%%%%%%%
\SetAlgoNoEnd
\begin{algorithm}

    \caption{Explore and Descend}
    \label{alg_explore_descend}

    \SetKwInOut{Input}{Input}
    \SetKwInOut{Output}{Output}
    \SetKwFunction{DescentOracle}{DescentOracle}
    
	\Input{graph $\G$, starting node $x_0$, budget $T$}
    \Output{$x$ such that $x = x^*$ with high probability}
    Per round budget $\{T_1,\dots,T_S\}$ such that $\sum_{s=1}^S T_s= T$\;
    \For{$s=0,\dots,S-1$}{
        $x_{s+1} \leftarrow$ \DescentOracle{$\G,x_{s},T_s$}\;
    }
    \Return $x_S$
\end{algorithm}

The algorithm depends on the subroutine $\DescentOracle(\G,x_{s-1},T_s)$. This subroutine is an implementation of the best-arm bandit algorithms described earlier with the decision set $\overline\N_x$ and budget $T_s$. 
\begin{corollary}
\label{cor:bound}
Let $f$ be a $m$-strongly convex function, let $x_1 \in \V$ be an arbitrary starting node, and let $\{T_1,\dots,T_S\}$, $\sum_{s=1}^S T_s = T$, be per round budgets. Assume that $S$ is sufficiently large for the steepest descent path from $x_1$ to reach $x^*$. Let $e_T = \P(x_{S+1} \neq x^*)$ be the probability of error of the \textsc{Explore-Descend} algorithm. Then $e_T$ is upper bounded by the following inequality:
\begin{equation}
\label{eq:ED0}
e_T \leq \frac{d(d-1)}{2} \sum_{s=1}^{S} \exp\left(-\frac{(T_s-d) \Delta^2_{1,s}}{d \overline{\log}(d)} \right) \,,
\end{equation}
where $\Delta_{i,s}$ is the $i$-th optimality gap at node $x_s$.
\end{corollary}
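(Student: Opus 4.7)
The plan is to bound the failure event by decomposing it round-by-round. Under the stated assumption, the steepest descent trajectory from $x_1$ reaches $x^*$ within $S$ rounds; therefore, if in every round $s \le S$ the subroutine $\DescentOracle(\G,x_s,T_s)$ correctly identifies the best arm in $\overline{\N}_{x_s}$ (the neighbor with the largest value of $\mu_i = f(x_s) - f(z_i)$), the walk follows the steepest descent path and terminates at $x^*$. Hence the event $\{x_{S+1}\neq x^*\}$ is contained in the union over $s=1,\dots,S$ of the event that SuccessiveReject errs in round $s$, and a union bound yields
\[
e_T \;\le\; \sum_{s=1}^{S} \P\bigl(\text{SuccessiveReject errs at round } s\bigr).
\]

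Next I would invoke Theorem~\ref{thm:SR} at each node $x_s$ with $K=d$ arms (the neighbors), sampling budget $T_s$, and optimality gaps $\Delta_{1,s}\le \Delta_{2,s}\le \cdots \le \Delta_{d,s}$ determined by the local values of $f$. This gives the per-round bound
\[
\P\bigl(\text{error at round } s\bigr) \;\le\; \frac{d(d-1)}{2}\exp\!\left(-\frac{T_s-d}{\overline{\log}(d)\, H_s}\right),
\]
where $H_s = \max_{i\in[d]} i\, \Delta_{i,s}^{-2}$ is the problem complexity at round $s$.

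The main step is then to replace the instance-dependent $H_s$ by a bound in terms of the smallest gap $\Delta_{1,s}$, which is the only per-round quantity that appears in the statement. Because the gaps are ordered, $\Delta_{i,s}\ge \Delta_{1,s}$ for all $i$, so
\[
H_s \;=\; \max_{i\in[d]} i\,\Delta_{i,s}^{-2} \;\le\; \max_{i\in[d]} i\,\Delta_{1,s}^{-2} \;=\; \frac{d}{\Delta_{1,s}^{2}}.
\]
Plugging this into the per-round bound and summing over $s$ yields exactly \eqref{eq:ED0}.

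The only subtle point, and the one I would articulate carefully, is the reduction from $\{x_{S+1}\neq x^*\}$ to the union of per-round best-arm failures. This relies on the corollary's explicit assumption that $S$ is large enough for the steepest descent path from $x_1$ to reach $x^*$; strong $m$-convexity ensures that at every node on this path a strict descent direction exists (so there is a unique best arm with $\Delta_{1,s}>0$), and by Lemma~\ref{lem:stopping-alg} the residual suboptimality is controlled by $\Delta_{x}$ so the descent cannot stall before reaching $x^*$. After this reduction, the remaining work is the mechanical application of Theorem~\ref{thm:SR} together with the elementary inequality $H_s \le d/\Delta_{1,s}^2$ described above.
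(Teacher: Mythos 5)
Your proposal is correct and follows essentially the same route as the paper's proof: a union bound over the per-round failure events of \textsc{SuccessiveReject}, an application of Theorem~\ref{thm:SR} with $K=d$ at each node, and the loose bound $H_s \le d\,\Delta_{1,s}^{-2}$. You articulate the reduction of $\{x_{S+1}\neq x^*\}$ to the union of per-round best-arm identification failures more explicitly than the paper does, but the argument is the same.
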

\begin{proof}
Let $H_{s} = \max_{i\in [d]} i \Delta_{i,s}^{-2}\,$. 
In round $s$, the \textsc{Explore-Descend} algorithm uses \textsc{SuccessiveReject} for the subroutine $\DescentOracle(f,\G,x_s,T_s)$. Thus, by Theorem \ref{thm:SR}, the probability of error in round $s$ is upper bounded by
\[ e_{T_s} \leq \frac{d(d-1)}{2}  \exp\left(-\frac{T_s-d}{H_{s} \overline{\log}(d)} \right). \]
Using the union bound on the above inequality for round $s=1, \dots, S$, and the loose upper bound $H_s \leq d \Delta^{-2}_{1,s}\,$, we obtain Equation \ref{eq:ED0}.
\end{proof}

On the other hand, we can also apply the \textsc{SuccessiveReject} algorithm directly on $V$, the set of all nodes of the graph. In this case, each node in the graph is one arm, and the graph structure is disregarded. The probability of error of \textsc{SuccessiveReject}, using Theorem \ref{thm:SR} with $K=n$ and $B=T$, is upper bounded by the following inequality:
\begin{equation}
\label{eq:SR1}
e'_T \le \frac{n(n-1)}{2} \exp\left(-\frac{T-n}{H \overline{\log}(n)} \right) \;.
\end{equation}
Using the loose upper bound $H \le n \Delta_1^{-2}$, the above can be written as:
\begin{equation}
\label{eq:SR2}
e'_T \le \frac{n(n-1)}{2} \exp\left(-\frac{(T-n)\Delta_1^2}{n \overline{\log}(n)} \right) \;.
\end{equation}

Note that the error bounds of \textsc{SuccessiveReject} given in Equations \ref{eq:SR1} and \ref{eq:SR2} is vacuous when $T \le n$. In contrast, Equation \ref{eq:ED0} provides meaningful error bounds for \textsc{Explore-Descend}, even in this small budget regime, when $T \leq n$. Additionally, we see that the error bound for \textsc{Explore-Descend} are independent of the size of the graph. Rather, it depends on $S$, which in turn depends on the convexity constant $m$. Larger $m$ means the function is steeper and fewer steps (smaller $S$) are required to reach the global optimum.

%%%%%%%%%%%%%%%%%%%%%%%%%%%%%%%%%%%%%%%%%%%%%%%
\subsection{Nearly Convex Problems: Simulated Annealing}

%Let $\widehat f_x$ be an estimate of $f_x$ based on a number of observations. 
Given that we have access only to noisy observations, we  consider the following Metropolis-Hastings Algorithm with exponential weights:
\[ 
P(x \rightarrow y) =
  \begin{cases}
    \frac{1}{d} \min\left(1, e^{\gamma (\widehat f_{x,s} - \widehat f_{y,s})} \right)        & \quad \text{if } y \in \N_x \\
    1 - \sum_{z\in \N_x} P(x\rightarrow z)  & \quad \text{if } y=x 
  \end{cases}
\]  
To simplify the analysis, we use a fixed time-independent inverse temperature, although in practice a time-varying inverse temperature might be preferable. We estimate each function value by $s=2r\gamma^2 R^2$ samples. Next, we provide sample complexity bounds for the above procedure in expectation. 
\begin{theorem}
For a $m$-strongly convex function $f$, let $\alpha = \frac{m}{m+1}$ and $\{x_0 \rightarrow x_1 \rightarrow x_2 \rightarrow \dots \}$ be the path generated by \textsc{SimAnnealing}
from an arbitrary initial node $x_0$. 
Given a constant $\epsilon$ and with the choice of $\gamma = \frac{d}{e \alpha \epsilon}$, after $t \geq \frac{\log\left(\alpha(f_{x^*}-f_{x_0})/(\epsilon d)\right)}{\log\left( d/(d - \alpha) \right)}$ rounds, we have 
\[
\E[f(x_{t+1}) - f(x^*)] \le \epsilon \;. 
\]
For a nearly convex function, let $\beta = 1 -\frac{\alpha e^{-c r \gamma}}{d^{r+1}}$ and let $F=\max_{y \in \V} f(y) - f(x^*)$. With the choice of $\gamma = 1/c$ and after $t \ge \frac{r}{\log(1/\beta)} \log \left( F\alpha\gamma\right)$ rounds, we have that 
\[
\E(f(x_{t+r+1}) - f(x^*)) \le \frac{3}{\alpha \gamma} d^{r+1} e^{r} \;.
\]
\end{theorem}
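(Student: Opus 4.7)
The plan is to analyze both parts via a drift argument on the potential $\phi_t := f(x_t) - f(x^*)$. First I would control the noisy estimates $\widehat f_{x,s}$: since $s = 2r\gamma^2 R^2$ samples are averaged and the noise is $R$-sub-Gaussian, a Hoeffding bound yields $|\widehat f_{x,s} - f_x| \le 1/\gamma$ with probability at least $1 - 2e^{-r}$, and an MGF computation gives $\E\bigl[e^{\pm\gamma(\widehat f_{x,s} - f_x)}\bigr] \le e^{O(1/r)}$. Either bound lets me replace the realized Metropolis kernel by its noiseless counterpart at the cost of only a universal constant multiplicative factor in every transition probability.

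For the strongly convex part I would condition on $x_t = x$ and split the one-step drift $\E[f(x_{t+1}) - f(x_t) \mid x_t = x]$ into a descent and an ascent contribution. Lemma~\ref{lem:stopping-alg} and $m$-strong convexity ensure a neighbor $z^\ast \in \N_x$ with $f_x - f_{z^\ast} \ge \overline\Delta_x \ge \alpha \phi_t$, and the transition probability to $z^\ast$ is at least $1/d$ in the noiseless kernel, so the descent term is at most $-(\alpha/d)\phi_t$ up to the above noise factor. The ascent term is
\[
\sum_{y \in \N_x,\, f_y > f_x} \frac{1}{d}(f_y - f_x)\, e^{-\gamma (f_y - f_x)} \;\le\; \sup_{a>0} a\, e^{-\gamma a} \;=\; \frac{1}{e\gamma},
\]
so the combined drift satisfies $\E[\phi_{t+1} \mid x_t] \le (1 - \alpha/d)\phi_t + C/\gamma$ for a universal constant $C$. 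Iterating this linear recursion gives $\E[\phi_t] \le (1 - \alpha/d)^t \phi_0 + Cd/(\alpha\gamma)$; substituting $\gamma = d/(e\alpha\epsilon)$ pushes the error term below $\epsilon$, and the stated lower bound on $t$ handles the geometric term.

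For the nearly convex part, the same argument works verbatim whenever $x \in \C$, since then $\overline\Delta_x \ge \alpha \phi_x$ by definition. When $x \notin \C$ I would use a low-energy path $p = (x = z_0, z_1, \ldots, z_\ell = y)$ of length $\ell \le r$ reaching some $y \in \C$ with $\max_i f(z_i) - f(x) \le c$. Multiplying the per-step transition lower bounds, the probability of following this path in $\ell$ consecutive steps is at least
\[
\frac{1}{d^\ell}\exp\bigl(-\gamma \textstyle\sum_i \max(0, f(z_i) - f(z_{i-1}))\bigr) \;\ge\; \frac{1}{d^r}\, e^{-\gamma c r},
\]
after bounding the accumulated ascent along the path by $cr$. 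Once at $y \in \C$, a further descent step reduces $\phi$ by a factor at least $\alpha$. Hence over any $r+1$ consecutive steps, with probability at least $\alpha\, e^{-\gamma c r}/d^{r+1}$ the potential shrinks multiplicatively by $1 - \alpha$, while on the complementary event the expected ascent summed over $r+1$ steps is $O(r/\gamma)$ by reapplying the one-step ascent bound. This yields $\E[\phi_{t+r+1} \mid x_t] \le \beta\, \phi_t + O(r/\gamma)$ with $\beta = 1 - \alpha\, e^{-\gamma c r}/d^{r+1}$. Iterating over blocks of length $r+1$, choosing $\gamma = 1/c$ so that $e^{-\gamma c r} = e^{-r}$, and matching constants delivers both the stated number of rounds and the error $3 d^{r+1} e^{r}/(\alpha \gamma)$.

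The main obstacle will be controlling the compounded noise and ascent over an entire $(r+1)$-step block in the nearly convex case. The per-step ascent bound $\sup_a a\, e^{-\gamma a} \le 1/(e\gamma)$ does not compose cleanly: both the noisy distortion of each transition and the backward ascent along the path multiply, and the event decompositions must respect the Markov structure. Careful bookkeeping—making essential use of the $\gamma^2$ scaling of the sample budget $s$ so that the multiplicative noise factor remains a universal constant independent of $\gamma$—is what keeps the final error at the claimed $O(d^{r+1} e^{r}/(\alpha\gamma))$ rather than something exponentially worse in $r$.
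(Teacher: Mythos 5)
Your proposal follows essentially the same route as the paper: a drift/potential argument on $f(x_t)-f(x^*)$, with the one-step recursion $(1-\alpha/d)\phi_t + O(1/\gamma)$ via the $a e^{-\gamma a}\le 1/(e\gamma)$ ascent bound in the strongly convex case, and the $(r+1)$-step block recursion through a low-energy path (probability $\ge e^{-\gamma c r}/d^{r+1}$, followed by an $\alpha$-descent at the reached point of $\C$) giving the same contraction factor $\beta$ in the nearly convex case, with the noise absorbed by the $s=2r\gamma^2R^2$ sample budget exactly as in the paper's MGF computation. The argument is correct up to the same constant-factor looseness already present in the paper's own derivation.
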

\begin{proof}
Let $y(x_t)$ be the closest point in $\C$ on a low energy path from $x_t$ and let $r(x_t)=r_t$ be the distance to this point from $x_t$. Let $\cP_{x_t}'$ be the set of paths of length less than $r_t+1$ and starting at $x_t$ such that at least one node on the path is not the same as $x_t$. Consider the low energy sub-path starting at $x_t$: 
\[
p=\{z_1=x_t \rightarrow z_2 \rightarrow \dots \rightarrow z_{r(x_t)}=y(x_t) \} \;. 
\]
Let $z(p)$ be the terminating node in a path $p$. Given that function $f$ is $(\alpha,c,r)$-nearly convex, and given that noise is $R$-sub-Gaussian\todoa{No mention about the noise in the theorem statement}, probability that this low-energy path is taken by the algorithm is
\begin{align*}
\P(p) &\le \frac{1}{d^{r_t+1}} \E \left(e^{-\sum_{k=1}^{r_t} \gamma (\widehat f(z_k) - \widehat f(z_{k-1}))} \right) \\
&= \frac{1}{d^{r_t+1}} \E \Bigg(\exp\Big(-\sum_{k=1}^{r_t} \gamma (\widehat f(z_k) -  f(z_{k}))\\ 
&\qquad\qquad\qquad+ \sum_{k=1}^{r_t} \gamma (\widehat f(z_{k-1}) -  f(z_{k-1}))\\ 
&\qquad\qquad\qquad+ \sum_{k=1}^{r_t} \gamma (f(z_{k-1}) -  f(z_{k}))\Big) \Bigg) \\
&\le \frac{1}{d^{r_t+1}} e^{-\gamma (f(z(p)) - f(z_1)) + 2 r_t \gamma^2 R^2/s}\;. 
\end{align*}
Because $p$ is a low energy path, we have $\P(p) \ge \frac{1}{d^{r_t+1}} e^{-c r_t \gamma}$. Let $c_t=f(y(x_t))-f(x_t)$, and $P'=\sum_{p'\in \cP_x'} \P(p')$. Notice that given $x_t$, $c_t$ and $y(x_t)$ are deterministic. We write
\begin{align*}
%\label{eq:SA-onestep}
&\E(f(x_{t+r_t+1})|x_t) \le \P(p) \left( f(x_t) + c_t - \Delta_{y(x_t)} \right)\\ 
&\qquad+ \sum_{p'\in \cP_x'} \P(p') f(z(p'))+ \left(1 - \P(p) - P' \right) f(x_t) \;.
\end{align*} 
The first term on the right side is related to the event that the state follows the low energy path to the state $y(x_t)$ for $r_t$ rounds, and then goes to the best immediate neighbor at state $y(x_t)$. The second term is related to the event that the state is not the same as $x_t$ after $r_t+1$ rounds. Finally, the last term is related to the event that the state stays in $x_t$ for the next $r_t+1$ rounds. If $\Delta_{y(x_t)} \le c_t$, then by Definition~\ref{defn:nearlyconvex}, we already have
\begin{align*}
f(y(x_t)) - f(x^*) &\le \frac{\Delta_{y(x_t)}}{\alpha}  \le \frac{c}{\alpha} \qquad \Rightarrow \\ 
f(x_t) - f(x^*) &\le (r_t + 1/\alpha)c \;.
\end{align*}
Otherwise if $c_t \le \Delta_{y(x_t)}$, we continue as follows:
\begin{align*}
&\E(f(x_{t+r_t+1})|x_t) \le f(x_t) + \frac{1}{d^{r_t+1}} e^{-c r_t \gamma } \left(c_t - \Delta_{y(x_t)} \right)\\ 
&\qquad+ \sum_{p'\in \cP_x'} \P(p') (f(z(p')) - f(x_t)) \\
&\le f(x_t) + \frac{1}{d^{r_t+1}} e^{-c r_t \gamma} \left(c_t - \Delta_{y(x_t)} \right)\\ 
&\qquad+ \sum_{\substack{p'\in \cP_x',\\ f(z(p')) \ge f(x_t)}} \P(p') (f(z(p')) - f(x_t)) \\
&\le \sum_{\substack{p'\in \cP_x',\\ f_{z(p')} \ge f_{x_t}}} \frac{e^{-\gamma (f(z(p')) -f(x_t)) + 2 r_t\gamma^2 R^2/s}}{d^{r_t+1}} (f_{z(p')} - f_{x_t}) \\ 
&\qquad+f(x_t) + \frac{1}{d^{r_t+1}} e^{-c r_t \gamma} \left(c_t - \Delta_{y(x_t)} \right) \\
&\le f(x_t) +\frac{1}{d^{r_t+1}} e^{-c r_t \gamma} \left(c_t - \Delta_{y(x_t)} \right) + \frac{e^{-1+2 r_t \gamma^2 R^2/s}}{\gamma}\,, \\
\end{align*} 
where the last step follows from inequality $e^{-\gamma b} b \le 1/(\gamma e)$. By Definition~\ref{defn:nearlyconvex}, $\Delta_{y(x_t)} \ge \alpha (f(y(x_t)) - f(x^*))$. Thus,
\begin{align*}
c_t - \Delta_{y(x_t)} &\le f(y(x_t))-f(x_t) - \alpha (f(y(x_t)) - f(x^*))\\
&= - (f(x_t) -f(x^*) ) - f(x^*) + \alpha f(x^*)\\ 
&\qquad+ (1-\alpha) f(y(x_t)) \\
&= - (f(x_t) -f(x^*) )\\ 
&\qquad+ (1-\alpha) ( f(y(x_t)) - f(x^*) ) \\
&\le - \alpha (f(x_t) -f(x^*) ) +  (1-\alpha)c r_t \,,
\end{align*}
where we used $f(y(x_t)) \le f(x_t) + c r_t$ in the last step. Let $k=t+r_t+1$. We have,
\begin{align*}
&\E(f(x_{k})|x_t) - f(x^*) \le \left( 1 - \frac{\alpha e^{-c r_t \gamma}}{d^{r_t+1}} \right) (f(x_t) -f(x^*) )\\ 
&\qquad\qquad+ \frac{c r_t e^{-c r_t \gamma} (1-\alpha)}{d^{r_t+1}} +  \frac{e^{-1+2 r_t \gamma^2 R^2/s}}{\gamma} \\
&\qquad\le \left( 1 - \frac{\alpha e^{-c r \gamma}}{d^{r+1}} \right) (f(x_t) -f(x^*) )\\ 
&\qquad\qquad+ \frac{c}{e(1+\log d)} +  \frac{e^{-1+2 r \gamma^2 R^2/s}}{\gamma} \\
&\qquad\le \beta^{\lfloor \frac{t}{r} \rfloor} (f(x_1) - f(x^*))\\ 
&\qquad\qquad+ \frac{1}{1-\beta} \left( \frac{c}{e(1+\log d)} + \frac{e^{-1+2 r \gamma^2 R^2/s}}{\gamma} \right) \,,
\end{align*}
where the second step holds by $r_t\le r$, $\gamma c = 1$, and $\max_r c r e^{-\gamma c r} d^{-r} \le (c/e)/(1+\log d)$, and $\beta$ is defined in the theorem statement. 
%Let $\alpha = 1 -\frac{m e^{-c r \gamma}}{d^{r+1}(m+1)}$. We get
Using the fact that $s=2r\gamma^2 R^2$ and $\gamma c = 1$, and a simple calculation shows that after $t \ge \frac{r}{\log(1/\beta)} \log \left( F \alpha \gamma \right)$ rounds, we have that
\begin{align*}
\E(f(x_{t+r+1}) - f(x^*)) &\le \frac{3 d^{r+1} e^{r}}{\alpha \gamma} \;.
\end{align*}
\todoy{I think this is the best we can hope for. So we have something meaningful only if $c$ and $r$ are sufficiently small.}
If $c=0$, then we get that
\begin{align*}
\E(f(x_{k})|x_t) - f(x^*) &\le \left( 1 - \frac{\alpha}{d^{r+1}} \right) (f(x_t) -f(x^*) ) +  \frac{1}{\gamma} \\
&\le \beta^{\lfloor \frac{t}{r} \rfloor} (f(x_1) - f(x^*)) + \frac{1}{(1-\beta)\gamma} \;.
\end{align*}
After $t \ge \frac{r}{\log(1/\beta)} \log \left( F \alpha \gamma \right)$ rounds, we have that
\begin{align*}
\E(f(x_{t+r+1}) - f(x^*)) &\le \frac{2 d^{r+1}}{\alpha \gamma} \;.
\end{align*}
For a strongly convex function, following a similar argument, we have that
\begin{align*}
\E(f(x_{t+1}) - f(x^*)) &\le \left( 1 - \frac{\alpha}{d} \right) (f(x_t) - f(x^*)) + \frac{1}{\gamma e} \;.
\end{align*}
Thus, given $\epsilon$ and with the choice of $\gamma$ in the theorem statement, after $t \geq \frac{\log\left(\alpha(f_{x^*}-f_{x_0})/(\epsilon d)\right)}{\log\left( d/(d - \alpha) \right)}$ rounds, we have $\E[f(x_{t+1}) - f(x^*)] \le \epsilon$. 
\end{proof}
\todoy{compare this with the complexity of ExploreDescend}
For nearly convex problems, the error bound in this theorem is meaningful only if parameter $r$ is small. In our experiments data, this value is $r=1$ or $r=2$.   

%%%%%%%%%%%%%%%%%%%%%%%%%%%%%%%%%%%%%%%%%%%%%%%
%\input{continuuation-method}

\if0
%%%%%%%%%%%%%%%%%%%%%%%%%%%%%%%%%%%%%%%%%%%%%%%
\subsection{Mini-Explore and Descend}

Can we have something between the above two extermes?

In the above section we explore the neighbors of a node until we are quite certain about the descend path. On the other extreme, what would happen if we only query each neighbor once and move to the descend direction? The probability of keep moving in the wrong directions should be getting exponentially smaller. 
\fi

%%%%%%%%%%%%%%%%%%%%%%%%%%%%%%%%%%%%%%%%%%%%%%%
%\input{small-graph}

%%%%%%%%%%%%%%%%%%%%%%%%%%%%%%%%%%%%%%%%%%%%%%%
%%%%%%%%%%%%%%%%%%%%%%%%%%%%%%%%%%%%%%%%%%%%%%%
\section{Experiments}

We implemented \textsc{Explore-Descend} and \textsc{Simulated Annealing} algorithms and compare them with \textsc{SpectralBandit} of \cite{Valkoetal2014} and \textsc{SuccessiveReject} of \cite{Audi10}. The \textsc{SuccessiveReject} algorithm works by considering all nodes of the graph as a big multi-arm bandit problem. The \textsc{SpectralBandit} uses the adjacency matrix to calculate the Laplacian and Eigenvectors. Both of these algorithms, therefore, require global information of the graph, while our algorithms assume only local information: from one node one can only access its neighbors.

We note that the \textsc{SpectralBandit} algorithm is originally designed to minimize the cumulative regret, which partly explains the poor performance in a best-arm identification problem. In the fixed budget best-arm identification setting, we run \textsc{SpectralBandit} until the budget is consumed and output the most frequently pulled arm, which is, in our experiments, better than taking the arm with the best empirical mean. The algorithm has a high time complexity because of its reliance on matrix operations on matrices and vectors of dimension $n$.  As \textsc{SpectralBandit} does not scale well with the size of the graph due to its matrix operations, we generated a small synthetic graph in order to evaluate it. 

%%%%%%%%%%%%%%%%%%%%%%%%%%%%%%%%%%%%%%%%%%%%%%%
%%%%%%%%%%%%%%%%%%%%%%%%%%%%%%%%%%%%%%%%%%%%%%%%%%%%%%
\subsection{Applications in Graph-Structured Best-Arm Identification: Synthetic Data}
\label{sec:synthetic}

First, we evaluated different algorithms on synthetic graphs, which are generated as follows. Each node of the graph is a point $(x, y)$, where $x,y \in \{-D,\dots, -1, 0, 1, \dots, D\}$ for some $D \in \mathrm{Z}^+$. Each node is connected to all of its eight immediate neighbors on the plain. Additionally, random edges are added in, such that the degree of each node is 15. The mean function value is $f(x,y) = 0.8(1 - \frac{x^2 + y^2}{2D^2}) \in [0, 0.8]$. This mean is unknown to the algorithms, i.e. when the algorithm requests the value of $f(x,y)$, it is returned with a stochastic value: 1 with probability $p=f(x,y)$ and 0 with probability $1-p$. It is easy to see that this graph is concave by Definition~\ref{defn:convex}.

The results of the experiment is presented in Figure \ref{fig:synthetic}. The performance measure is the average  sub-optimality gap, i.e. $f(x^*) - f(\hat{x})$, where $\hat{x}$ is the solution returned by the algorithm. The average is taken over 1000 trials. Overall, our algorithms significantly outperform \textsc{SpectralBandit} and \textsc{SuccessiveReject} both in term of time and sub-optimality gap, especially when the budget is smaller than the graph size, which is our intended setting. For \textsc{Simulated Annealing}, we use a fixed inverse temperature $\gamma = 250$. The result could be further improved by optimizing a schedule for this parameter. Interestingly, the number of pulls for each function evaluation also has significant impact on the performance, which can be seen from the plots for \emph{Sim Annealing 1} (1 pull) and \emph{Sim Annealing 5} (5 pulls) in Figure \ref{fig:synthetic}. As for \textsc{Explore-Descend}, we simply allocate the budget equally for each node in the descending path, with the maximum path length set to $4$ for $D=10$ and $20$ for $D=100$. This algorithm is the fastest and also offers the best performance. Source code is available at \url{https://github.com/tan1889/graph-opt}  

\begin{figure*}%[t]
\begin{center}
    \includegraphics[width=\columnwidth]{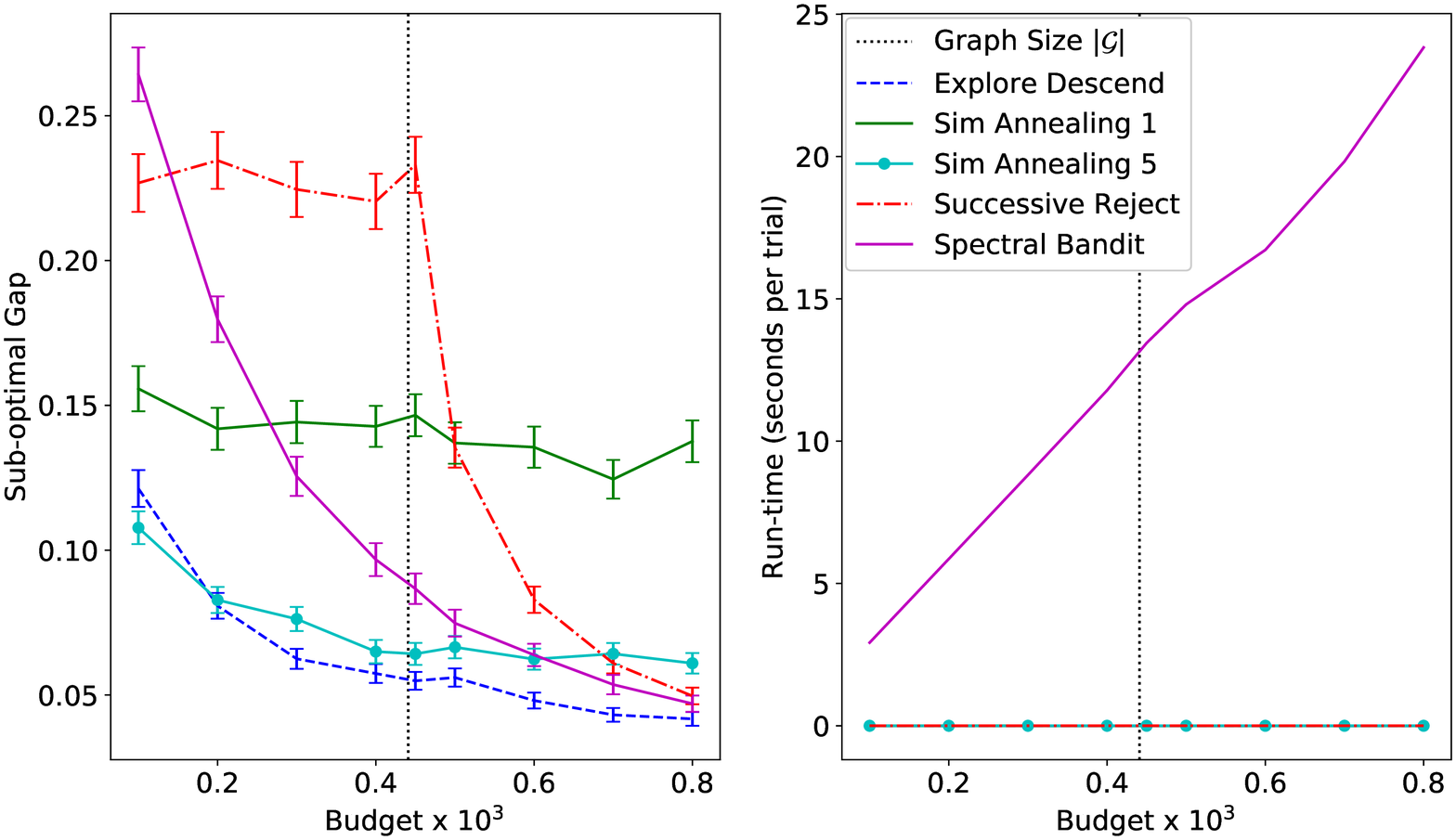}
    \includegraphics[width=\columnwidth]{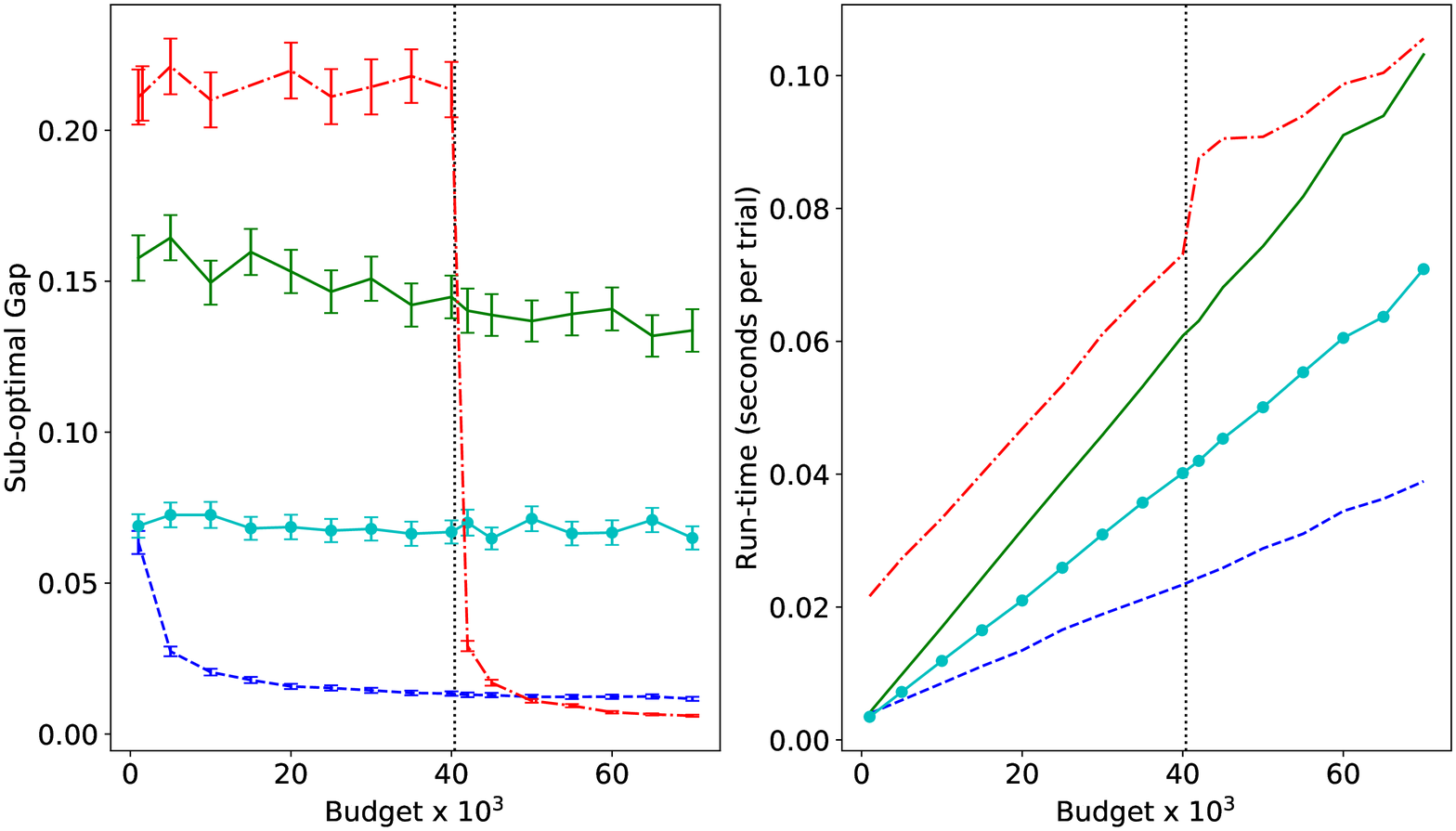}
    \caption{\label{fig:synthetic} Comparing the performance of different algorithms on synthetic data. Two left figures: Small synthetic graph $D=10$ (441 nodes). Two right figures: Large synthetic graph $D=100$ (40401 nodes). Figures show average sub-optimality gaps over 1000 trials and run-time per trial of the algorithms. Same legend (on the second figure) for all figures.}
\end{center}
\end{figure*}

\if0
%%%%%%%%%%%%%%%%%%%%%%%%%%%%%%%%%%%%%%%%%%%%%%%%%%%%%%%%%%%%%%%%%
\subsection{Applications in Graph-Structured Best-Arm Identification: Recommendation Problems}
\label{sec:ab}

One example of application of stochastic graph-based function optimization is the problem of recommending the best configuration for a system or the best layout for a design. These are similar problem. The goal is to find the best combination of parameters for a system or of contents for a layout. As a specific example, consider the following layout recommendation problem. A webpage is made up of a number of components, and the content of some of these components can be chosen by a learning strategy given past observations. The objective is to find a combination of these contents (a configuration) so that visitor engagement, measured by click-through-rate or conversion, is maximized. We can consider this problem as a graph structured best arm identification problem as follows. Let each configuration be a node of a graph. Two nodes are connected if their associated configurations are similar (differ only at one component). The value of a node is a Bernoulli random variable with mean equal to the click-through rate of the associated configuration. We experimented on a private dataset of this problem and obtained good results, which could not be published due to restrictions. Instead, we demonstrate the performance of our algorithms on real data in the next example, noting that the results we got here were even better.  
\fi

%%%%%%%%%%%%%%%%%%%%%%%%%%%%%%%%%%%%%%%%%%%%%%%%%%%%%%%%%%%%%%%%%
\subsection{Applications in Graph-Structured Best-Arm Identification: Web Document Reranking}

To demonstrate the performance of our algorithm on real-world non-concave problems, we used data from Yandex Personalized Web Search Challenge to build a graph as follows. Each query forms a graph, whose nodes correspond to lists of 5 items (documents) for the given query. Two nodes are connected if they have 4 or more items in common at the same positions. The value of a node is a Bernoulli random variable with mean equal to the click-through rate of the associated list. The goal is to find the node with maximum click-through rate, i.e. the most relevant list. We chose the query that generated the largest possible graph (query no. 8107157) of 4514 nodes. As there were many small partitions in this graph, we took the largest partition as the input for our experiment. 
The resulting graph has 3992 nodes with degree varying from 1 to 171 (mean=35) and a maximum function value at $0.747$. %It is not concave, but $(\alpha=0.1, c=0.0543, r=11)$--nearly concave by Definition~\ref{defn:nearlyconvex}. Comparing to the graph we had earlier in Section~\ref{sec:ab}, which was $(\alpha=0.1, c=0.0087, r=2)$--nearly concave, this graph is significantly more non-concave (and also more unbalanced), thus more challenging to optimize.

\begin{figure}%[t]
\begin{center}
    \includegraphics[width=\columnwidth]{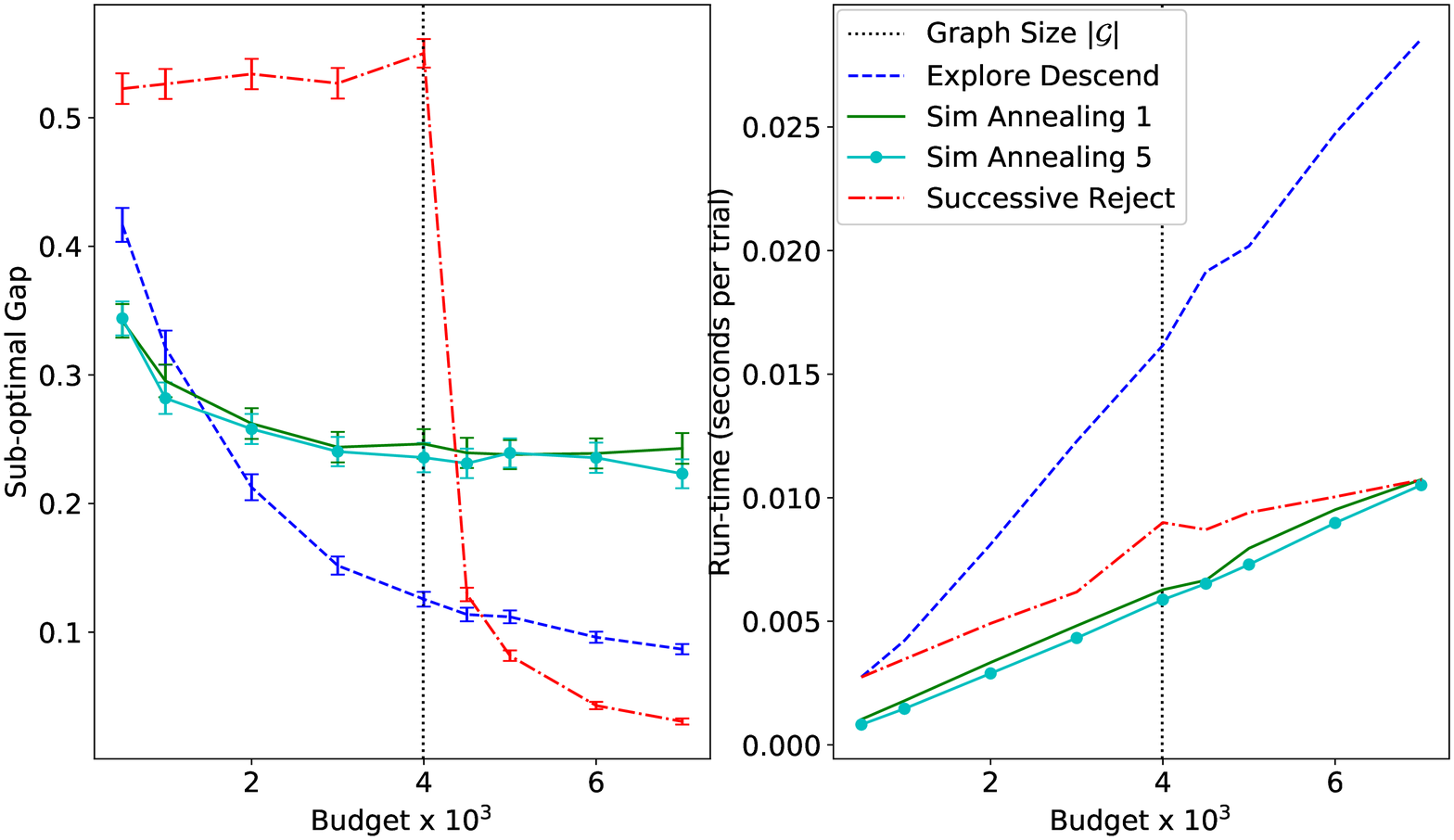}
    \caption{\label{fig:abtest} Comparing the performance of algorithms on web document reranking data. Left: Average sub-optimal gap over 1000 trials. Right: Run-time (s) per trial. Same legend for both figures.}
\end{center}
\end{figure}

For non-concave graphs, \textsc{Explore-Descend} needs to make multiple restarts. We set the number of restarts to $1+budget/1000$ and allocate the $budget$ equally between all restarts, then, for each restart, equally between each node in the path. All other parameters are the same as before. 

The results of the experiment is presented in Figure~\ref{fig:abtest}. In the intended setting, our algorithms significantly outperform \textsc{Successive Reject}. For very small budget, \textsc{Simulated Annealing} is better than \textsc{Explore-Descend}, but this is reversed as the budget gets bigger. Additionally, for this graph we don't see the big advantage of \emph{Sim Annealing 5} over \emph{Sim Annealing 1} as was the case before. Outside of the intended setting, \textsc{Successive Reject} quickly becomes the best algorithm when the budget gets larger than the graph size. Although, this algorithm requires global information about the graph, which may not be always feasible.

%%%%%%%%%%%%%%%%%%%%%%%%%%%%%%%%%%%%%%%%%%%%%%%
\subsection{Applications in Graph-Based Nearest-Neighbor Classification}
\label{sec:gnns}

\begin{figure*}[h]
    \centering
    \subfloat[]{{\includegraphics[width=0.25\textwidth]{./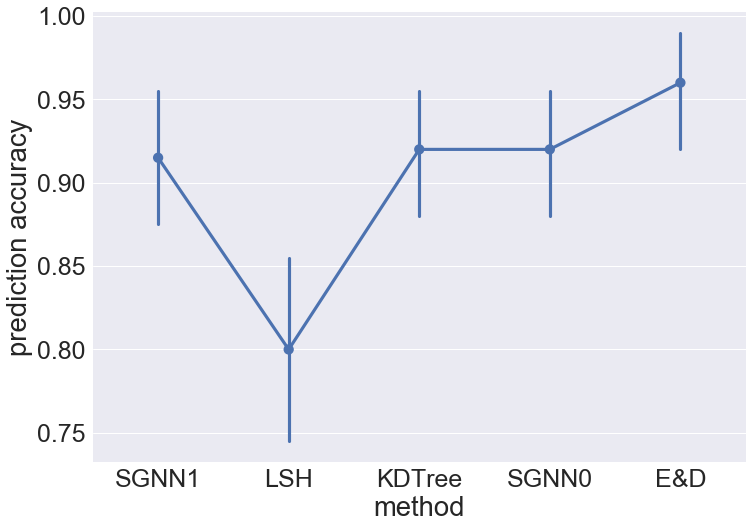} 	}}
	\subfloat[]{{\includegraphics[width=0.25\textwidth]{./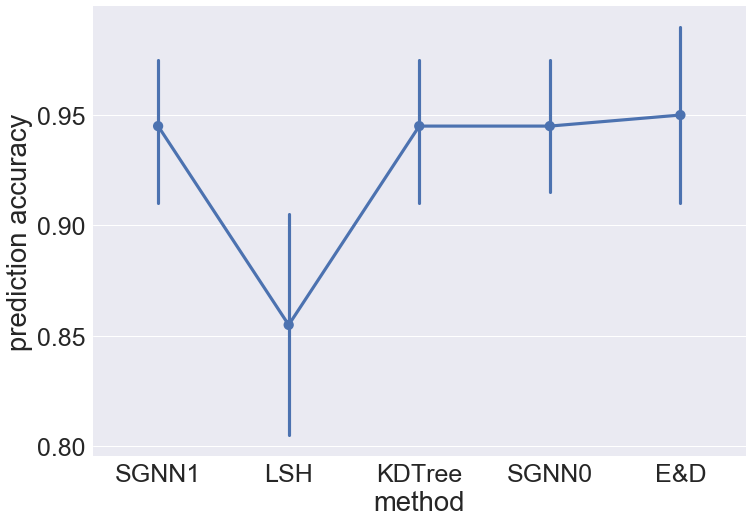} 	}} 	 
    \subfloat[]{{\includegraphics[width=0.25\textwidth]{./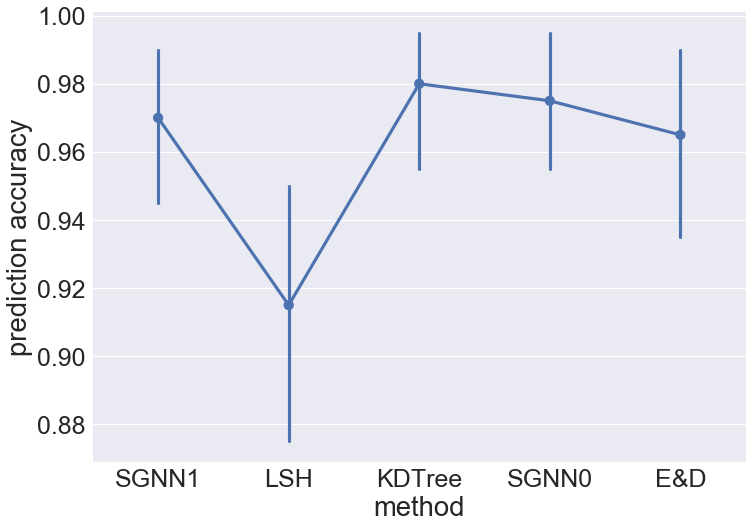} 	}} 
   \subfloat[]{{\includegraphics[width=0.25\textwidth]{./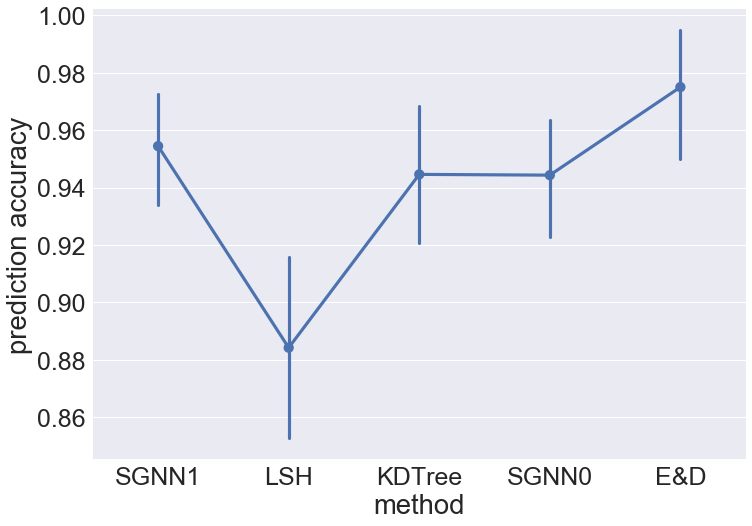} 	}} \\
    \subfloat[]{{\includegraphics[width=0.25\textwidth]{./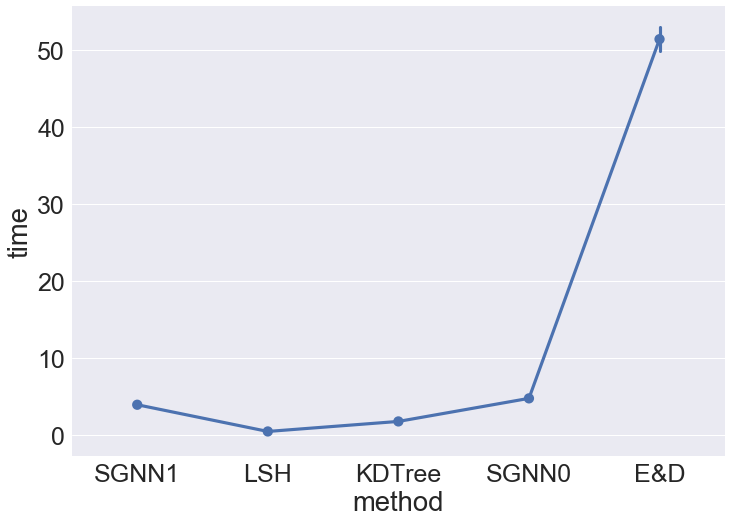} 	}}
	\subfloat[]{{\includegraphics[width=0.25\textwidth]{./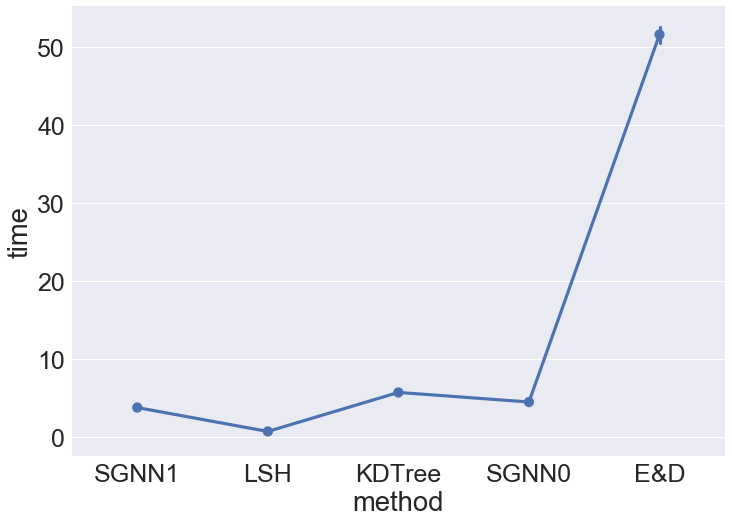} 	}} 	 
    \subfloat[]{{\includegraphics[width=0.25\textwidth]{./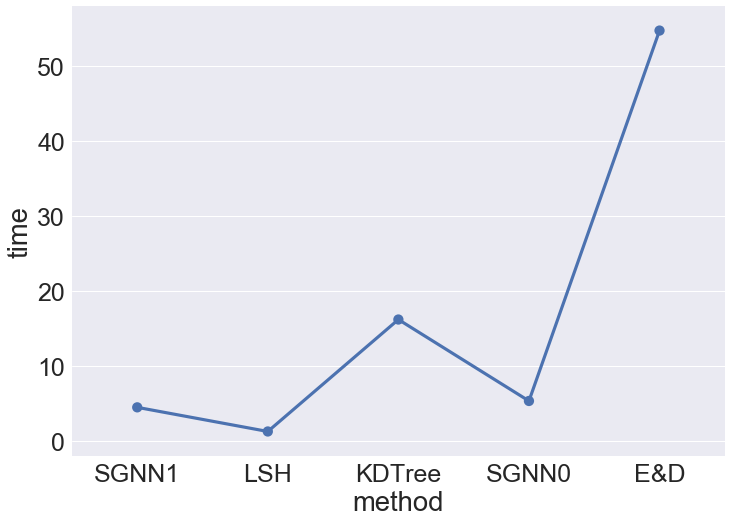} 	}} 
   \subfloat[]{{\includegraphics[width=0.25\textwidth]{./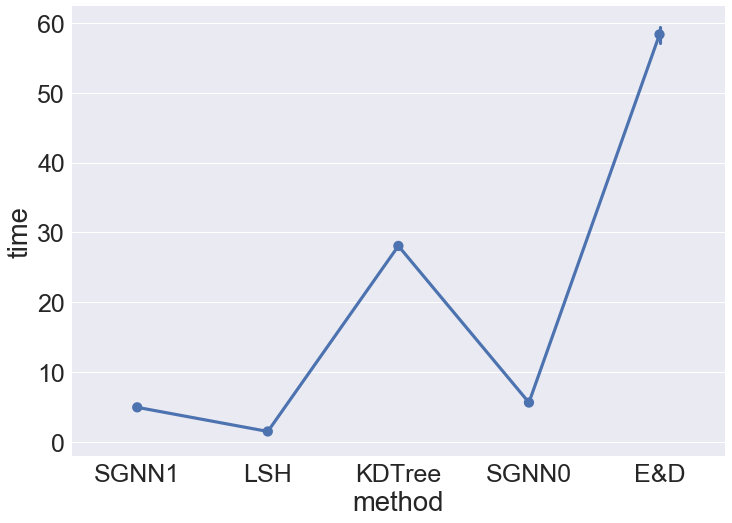} 	}}    
    \vspace{0cm}
    \caption{Prediction accuracy and running time of different methods on MNIST dataset as the size of training set increases. (a,e) Using 25\% of training data. (b,f) Using 50\% of training data. (c,g) Using 75\% of training data. (d,h) Using 100\% of training data.} \label{fig:mnist}%
\end{figure*}

In this section, we use the proposed graph-based optimization methods in a graph-based nearest neighbor search problem. 
%The basic idea is to construct a data structure that allows for fast test phase nearest neighbor computation. 
The graph-based nearest neighbor method takes a training set, constructs a proximity graph on this set, and when queried in the test phase, performs a hill-climbing search to find an approximate nearest neighbor. More details are given in Appendix~\ref{app:gnns}. This procedure is particularly well suited to big-data problems; In such problems, points will have close neighbors, and so the geodesic distance with respect to even a simple metric such as Euclidean metric should provide an appropriate metric. Further, and as we will show, computational complexity of the graph-based method in the test phase scales gracefully with the size of the training set, a property that is particularly important in big data applications. The intuition is that, in big-data problems, although a descend path might be longer,  the objective function is generally more smooth and hence easier to minimize. 

We apply the local search and simulated annealing methods along with an additional smoothing to the problem of nearest neighbor search. For \textsc{Simulated Annealing}, we call the resulting algorithm SGNN for Smoothed Graph-based Nearest Neighbor search. The pseudocode of the algorithm and more details are in Appendix~\ref{app:gnns}. The \text{Explore-Descend} is denoted by E\&D in these experiments. %It simply evaluates the neighbors once and makes a greedy move. 
We compared the proposed methods with the state-of-the-art nearest neighbor search methods (KDTree and LSH) in two image classification problems (MNIST and COIL-100). 
In an approximate nearest neighbor search problem, it is crucial to have sublinear time complexity, and thus \textsc{SpectralBandit} and \textsc{SuccessiveReject} are not applicable here. 
%\todoy{have we tried smoothing with longer random walks, say 3? other graphs? other random restarts?}
%\todoy{say that we are doing better than simulated annealing...}

Figure~\ref{fig:mnist} (a-d) shows the accuracy of different methods on different portions of MNIST dataset. The graphs in these experiments are not concave, but $(\alpha=0.3, c=0, r=2)$--nearly concave by Definition~\ref{defn:nearlyconvex}. The results for COIL-100 are shown in Appendix~\ref{app:gnns}. 
%Using the exact nearest neighbor search, we get the following prediction accuracy results (the error bands are 95\% bootstrapped confidence intervals): with full data, accuracy is $0.955\pm 0.01$; with 3/4 of data, accuracy is $0.951\pm 0.01$; with 1/2 of data, accuracy is $0.943\pm 0.01$; and with 1/4 of data, accuracy is $0.932\pm 0.01$. 
As the size of training set increases, the prediction accuracy of all methods improve. Figure~\ref{fig:mnist} (e-h) shows that the test phase runtime of the SGNN method has a more modest growth for larger datasets. In contrast, KDTree becomes much slower for larger training datasets. The LSH method generally performs poorly, and it is hard to make it competitive with other methods. 
When using all training data, the SGNN method has roughly the same accuracy, but it has less than 20\% of the test phase runtime of KDTree. 

%Although our method is very general and is not specifically designed to solve nearest neighbor problems, we were able to get competitive results with state of the art nearest neighbor search algorithms such as KDTree and LSH. Below the performance of our proposed method (Annealing) and  KDTree are shown. The performance of LSH is much worse and it did not fit in the graph. 

%%%%%%%%%%%%%%%%%%%%%%%%%%%%%%%%%%%%%%%%%%%%%%%
\section{Conclusions and Future Work}

%It is possible to show rates for convex functions without strong convexity requirement: it will be in terms of the radius of the graph. 
 
We studied sample complexity of stochastic optimization of graph functions. We defined a notion of convexity for graph functions, and we showed that under the convexity condition, a greedy algorithm and the simulated annealing enjoy sample complexity bounds that are independent of the size of the graph. An interesting future work is the study of cumulative regret in this setting. 

We showed effectiveness of the proposed techniques in a web document re-ranking problem as well as a graph-based nearest neighbor search problem. The computational complexity of the resulting nearest neighbor method scales gracefully with the size of the dataset, which is particularly appealing in big-data applications. Further quantification of this property remains for future work.  

%\cite{Laarhoven-2017} show that the theoretical time and space complexity of graph-based nearest neighbor search can be competitive with hash-based methods in certain regimes. In experiments, the scaling is really good. Good open problem...

%%%%%%%%%%%%%%%%%%%%%%%%%%%%%%%%%%%%%%%%%%%%%%%
\section*{Acknowledgement}

TN was supported by the Australian Research Council Centre of Excellence for Mathematical and Statistics Frontiers (ACEMS).

\newpage
\bibliography{main}

\newpage
\onecolumn
\appendix
\if0
%%%%%%%%%%%%%%%%%%%%%%%%%%%%%%
\section{Proofs of Section~\ref{sec:hillclimbing}}
\label{app:proofs1}

\begin{proof}[Proof of Lemma~\ref{lem:goodarm}]
Given that $\max_{j\neq 1} \widehat \mu_{j,t} \le \widehat \mu_{1,t}$, $b_{1,t}(\delta) \le \frac{1}{3} \widehat \mu_{1,t}$ implies \eqref{eq:stopping-bandit}. Because $\widehat \mu_{1,t} \ge \widehat \mu_* \ge \mu_* - b_{*,t}(\delta)$, condition \eqref{eq:stopping-bandit} holds if $b_{1,t}(\delta) \le \frac{1}{3} (\mu_* - b_{*,t}(\delta))$, or equivalently, $b_{1,t}(\delta) \le \frac{3}{4} \mu_*$. This last inequality holds after $t \ge \frac{8 \log(K/\delta)}{9 \mu_*^2}$ rounds. Given that we need this many samples from each arm, the total sample complexity is $T = \frac{8 K \log(K/\delta)}{9 \mu_*^2}$. 
\end{proof}

\begin{proof}[Proof of Lemma~\ref{lem:subroutine-complexity}]
The RHS of \eqref{eq:stopping-subroutine} is smallest when $\widehat f_{x,s} = f_{x} - b_s(\delta')$, $\widehat f_{y,s} = f_{y} - b_s(\delta')$, and $\widehat f_{\widehat z,s} = f_{z^*} + b_s(\delta')$. To see the last inequality, notice that $\widehat f_{\widehat z,s} \le \widehat f_{z^*,s} \le f_{z^*} + b_s(\delta')$. Thus, we need to solve the following inequality for s:
\[
b_{s}(\delta') \le \frac{1}{2} \left( \frac{1}{2} ( f_{x} + \min_{y\neq \widehat z} f_{y}) -  f_{z^*} \right) \;.
\]
Given that $f_{y} \ge f_{z^*}$, we can get a tighter bound, $b_{s}(\delta') \le \frac{1}{4} \left( f_{x} - f_{z^*} \right) = \frac{1}{4} \overline\Delta_x$. 
This implies that $s \ge \frac{8 \log(d/\delta')}{ \overline\Delta_x^2}$. 
We get the desired bound by taking all neighbors into account.
\end{proof}

\begin{proof}[Proof of Lemma~\ref{lem:stopping-alg}]
Let $p = \{y=x \rightarrow y_1 \rightarrow \dots \rightarrow y_k \}$ be a strongly convex path. Then 
\[
\Delta_{y_k} \le \frac{1}{1+m} \Delta_{y_{k-1}} \le \dots \le \frac{1}{(1+m)^k} \Delta_x \;.
\]
We have that
\[
f_x - f_{x_k} = \sum_{i=1}^k (f_{y_{i-1}} - f_{y_i}) = \sum_{i=1}^k \Delta_{y_{i-1}} = \Delta_x + \sum_{i=1}^{k-1} \Delta_{y_i} \le \sum_{i=0}^{k-1} \frac{\Delta_x}{(1+m)^i} \;.
\]
We conclude that $f_x - f_{y_k} \le \frac{m+1}{m} \Delta_x$, from which the statement follows. 
\end{proof}

\begin{proof}[Proof of Theorem~\ref{lem:finalsample}]
Let $p = \{x_0 \rightarrow x_1 \rightarrow \dots \rightarrow x_k \}$ be the path generated by the algorithm. Notice that this path is not necessarily a strongly convex path. By Lemma~\ref{lem:subroutine-complexity} and a union bound, with probability $1-\delta$, the sample complexity in round $i$ is less than $\frac{8 d \log(d k/\delta)}{ \Delta_{x_i}^2}$, and in each round a ``good arm" is returned. Given that $\Delta_{x_{i-1}} \ge \frac{m}{m+1} (f_{x_{i-1}} - f_{x^*})$, $\Delta_{x_{i-1},x_i} \ge \frac{1}{2} \Delta_{x_{i-1}}$, and $f_{x_i} = f_{x_{i-1}} - \Delta_{x_{i-1},x_i}$, we get
\[
f_{x_i} - f_{x^*} = f_{x_{i-1}} - f_{x^*} - \Delta_{x_{i-1},x_i} \le f_{x_{i-1}} - f_{x^*} - \frac{m}{2m + 2} (f_{x_{i-1}} - f_{x^*})\,,
\]
and thus,
\[
f_{x_k} - f_{x^*} \le \frac{m+2}{2m+2} (f_{x_{k-1}} - f_{x^*}) \le \dots \le \left( \frac{m+2}{2m+2} \right)^k (f_{x_0} - f_{x^*}) \;.
\]
Thus, we have that 
\[
\text{if } k = \frac{\log\frac{2}{\epsilon}}{\log \frac{2m + 2}{m+2}} \ge \frac{\log\frac{f_{x_0} - f_{x^*}}{\epsilon}}{\log \frac{2m + 2}{m+2}} \qquad \Longrightarrow \qquad f_{x_k} - f_{x^*} \le \epsilon \;.
\]

Next, we upper bound the number of samples before the stopping condition is satisfied. Because the stopping condition is not satisfied,
\[
2 b_s(\delta') > \widehat f_{\widehat z,s} - \widehat f_{x,s} + \frac{m}{2(m+1)} \epsilon \;.
\]
We either have $2 b_s(\delta') > \frac{m}{8(m+1)} \epsilon$, or we have $2 b_s(\delta') \le \frac{m}{8(m+1)} \epsilon$ and $\widehat f_{x,s} - \widehat f_{\widehat z,s} > \frac{3 m}{8(m+1)} \epsilon$. In the first case, we get $s \le \frac{16^2 (m+1)^2 \log (d/\delta')}{2 m^2 \epsilon^2}$. In the second case, given that confidence intervals hold, we get that $\Delta_{x,\widehat z} > \frac{m \epsilon}{4 (m+1)}$, and thus $\overline \Delta_{x} > \frac{m \epsilon}{4 (m+1)}$. By Lemma~\ref{lem:subroutine-complexity}, we have $s \le \frac{(16)(m+1)^2 8 d \log(d/\delta')}{m^2 \epsilon^2}$. By taking the maximum of these two upper bounds, in each round, we have that 
\[
s \le \frac{8(16) (m+1)^2 \log (d/\delta')}{m^2 \epsilon^2} \;.
\]
In the last round and if the stopping condition of the \textsc{Explore-Descend} algorithm is satisfied, then $2 b_s(\delta') \le \widehat f_{\widehat z,s} - \widehat f_{x,s} + \frac{m}{2(m+1)} \epsilon$. To maximize the number of samples, we set $\widehat f_{x,s} =  f_{x} + b_s(\delta')$ and $\widehat f_{\widehat z,s} = f_{\widehat z} - b_s(\delta')$. This implies solving $2 b_s(\delta') \le  f_{\widehat z} - f_{x} + \frac{m}{2(m+1)} \epsilon$. If $\Delta_{x,\widehat z} \le \frac{m \epsilon}{4(m+1)}$, then the number of samples is upper bounded by solving $2 b_s(\delta') \le \frac{m}{4(m+1)} \epsilon$. This gives $s\le \frac{32(m+1)^2 \log(d/\delta')}{m^2 \epsilon^2}$. Otherwise and if $\Delta_{x,\widehat z} > \frac{m \epsilon}{4(m+1)}$, we use Lemma~\ref{lem:subroutine-complexity}: we have $\overline\Delta_x \ge \Delta_{x,\widehat z}$, and thus
\[
s \le \frac{8 d \log(d/\delta')}{ \overline\Delta_x^2} \le \frac{8(16) (m+1)^2 d \log(d/\delta')}{m^2 \epsilon^2} \;. 
\]
By considering the above upper bounds, we obtain the final sample complexity:
\[
\frac{\log\frac{2}{\epsilon}}{\log \frac{2m + 2}{m+2}} \cdot  \frac{8(16) (m+1)^2 d \log(d/\delta')}{m^2 \epsilon^2}
\]

\end{proof}
\fi

\if0
%%%%%%%%%%%%%%%%%%%%%%%%%%%%%%
\section{Nearly Convex Problems: Continuuation Method}
\label{app:smoothing}

We can generally improve the performance of the above methods by adding a smoothing operation. In the simplest form, instead of directly optimizing function $f$, we aim to optimize $g(x,i) = \E(\widehat f(x,i))$, where $\widehat f(x,i)$ is the value of $f$ when we start a random walk of length $i$ from node $x$. A more advanced version minimizes $g(.,T)$ for a large $T$ first, and then starting from this minimum, it minimizes $g(.,i)$ for some $i<T$. This process is repeated until $g(.,0)=f$ is minimized. Ideally, with a large $T$, $g(.,T)$ should be a convex function. This procedure is known as continuation method, and it is a popular approach in non-convex continuous optimization and  computer vision~\cite{Witkin-Terzopoulos-Kass-1987, Terzopoulos-1988, Blake-Zisserman-1987, Yuille-1987, Yuille-1990, Yuille-Peterson-Honda-1991}. This procedure can be used along with \textsc{ExploreDescend} and simulated annealing, and as we will show, it generally improves the performance. In this section, we introduce a variant of continuation method for discrete optimization, and we show that this procedure is an affine approximation of a smoothing operation that transforms $f$ to a convex function. The details of the procedure is shown in Figure~\ref{alg:opt} and Figure~\ref{alg:hillclimbing}.  

\begin{figure}
\begin{center}
\framebox{\parbox{12cm}{
\begin{algorithmic}
\STATE \textbf{Input: } Number of rounds $T$. 
\STATE Initialize $x_T = $ random point in $S$. 
\FOR{$t:=T,T-1,\dots, 1$}
\STATE $x_{t-1} = \textsc{Hill-Climbing}(x_t, t)$
\ENDFOR
\STATE Return $x_0$
\end{algorithmic}
}}
\end{center}
\caption{The Optimization Method}
\label{alg:opt}
\end{figure}

\begin{figure}
\begin{center}
\framebox{\parbox{12cm}{
\begin{algorithmic}
\STATE \textbf{Input: } Starting point $x\in S$, length of random walks $s$. 
\WHILE{$x$ not local minima}
\STATE Perform a random walk of length $s$ from the starting state $x$. Let $y$ be the stopping state.
\STATE Let $\widehat f(x, s) = f(y)$ 
\FOR{$x'\in N_x$}
\STATE Perform a random walk of length $s$ from the starting state $x'$. Let $y'$ be the stopping state.
\STATE Let $\widehat f(x', s) = f(y')$.
\ENDFOR
\STATE Update $x = \argmin_{z\in \{x\}\cup N_x} \widehat f(z, s)$  
\ENDWHILE
\STATE Return $x$
\end{algorithmic}
}}
\end{center}
\caption{The \textsc{Hill-Climbing} Subroutine}
\label{alg:hillclimbing}
\end{figure}

Next, we describe the intuition behind the above continuation method. First, we show a smooth transformation of $f$ to a convex function. Let $N_i\subset \V$ be the set of neighbors of node $i\in \V$ in $\G$ and $c\in\Real$ be a non-negative constant. Consider a sequence of functions defined by, $\forall i\in \V$, 
\begin{align*}
f(i, 0)=f(i)\,, \qquad f(i, t+1)=\min( f(i, t),  h(i,t) ) \,,
\end{align*}
where $h(i,t) = \min_{u\in N_i} f(u,t) + c$. Function $f(i, t)$ represents the value of node $i$ at smoothing round $t$. Let $D$ be the diameter of the graph. We can see that for small enough $c$, the above sequence will eventually produce a convex function after no more than $D$ steps. Further, the optimum point will not change after this transformation. Let $t$ be large enough such that $f(.,t)$ is convex. Given that $f(.,t)$ is convex and $\argmin_{i\in S} f(i) \in \argmin_{i\in S} f(i,t)$, we only need to find the minima of $f(.,t)$. 
Unfortunately, computing  $f(.,t)$ can be computationally expensive. We show that an approximation of $f(.,t)$ however can be computed efficiently. The proof is in Appendix~\ref{app:proofs}. 
\begin{theorem}
\label{thm:continuuation}
Let $\N$ be the operator defining the above sequence; $f(i,t+1) = \N\{f(.,t)\}(i)$. The best affine approximation of $\N$ can be computed by performing a random walk of length 1 on the graph.
%We show that a linear approximation of $\N$ can be computed efficiently. In fact, it turns out that we can compute the linear approximation by performing random walks on the graph. 
\end{theorem}
\begin{proof}
Before giving details, we define some notation. Let $\H$ be the class of real-valued functions defined on $S$. We say an operator $\L$ is linear if and only if $\forall h_1 \in \mathcal{H}, \forall h_2 \in \mathcal{H}, a\in \mathbb{R}, b\in \mathbb{R}; \L\{ah_1+bh_2\}=a\mathcal{L}\{h_1\}+b\mathcal{L}\{h_2\}$. Consider function $h\in\H$ and suppose it is a small perturbation of some function $h^*$ in the direction $\phi\in\H$, that is,
\[
h=h^*+\epsilon \phi
\]
for some small scalar $\epsilon$. Suppose $\mathcal{N}\{h^*+\epsilon \phi\}$ is differentiable in direction $\phi$ so that its first order expansion is 
\begin{align*}
\N\{h\} = \N\{h^*+\epsilon\phi\} =\N\{h^*\}+\epsilon\left(\frac{d}{d\epsilon}\N\{h^*+\epsilon\phi\}\right)_{|\epsilon=0}+o(\epsilon) \;.
\end{align*}
Letting $u_1, u_2, \ldots, u_m$ be the neighbors of node $i$, we have,
\begin{align*}
\N\{f\}(i)=\min\{f(i), f(u_1)+c, f(u_2)+c, \ldots\} \approx \frac{f(i)e^{-\lambda f(i)}+(f(u_1)+c)e^{-\lambda(f(u_1)+c)}+\ldots}{e^{-\lambda f(i)}+e^{-\lambda (f(u_1)+c)}+\ldots}
\end{align*}
for sufficiently large $\lambda>0$. 
Let $f^*$ be the origin in the function space, i.e. $f^*(i)=0$ for any $i$. We can then write 
\begin{align*}
\N\{f\} &= \N\{f^*+\epsilon\phi\} \\
&\approx \N\{f^*\}+\epsilon\left(\frac{d}{d\epsilon}\N\{f^*+\epsilon\phi\}\right)|_{\epsilon=0}+o(\epsilon)\\
&= \N\{f^*\}+\epsilon\left(\frac{d}{d\epsilon}\N\{\epsilon\phi\}\right)|_{\epsilon=0}+o(\epsilon) \;.
\end{align*}
We have
\begin{align*}
\N\{\epsilon\phi\}(i)=\frac{\epsilon\phi(i)e^{-\lambda\epsilon\phi(i)}+(\epsilon\phi(u_1)+c)e^{-\lambda(\epsilon\phi(u_1)+c)}+\ldots}{e^{-\lambda\epsilon\phi(i)}+e^{-\lambda(\epsilon\phi(u_1)+c)}+\ldots}
\end{align*}
By taking derivatives and setting $\epsilon=0$ we will have,
\begin{align*}
\frac{d}{d\epsilon}\N\{\epsilon\phi\}(i)|_{\epsilon=0} = \frac{(\phi(i)+\sum_{j=1}^{m}\phi(u_j)e^{-\lambda c}(1-\lambda c))(1+m e^{-\lambda c})}{(1+m e^{-\lambda c})^2}+ \frac{m c e^{-\lambda c}\lambda ( \phi(i)+\sum_{j=1}^{n}\phi(u_j)e^{- \lambda c})}{(1+m e^{-\lambda c})^2} \;.
%\frac{\phi(i)+\frac{1}{2}(\phi(u_1)+\phi(u_2))+\ldots}{2+C(n, 2)}
\end{align*}
Thus
\[
\lim_{c\to 0} \frac{d}{d\epsilon}\mathcal{N}\{\epsilon\phi\}(i)|_{\epsilon=0} = \frac{\phi(i)+\sum_{j=1}^{m}\phi(u_j)}{1+m} \;.
\]
Therefore,
\begin{align*}
\mathcal{N}\{f\}(i)\approx\epsilon \frac{\phi(i)+\sum_{j=1}^{m}\phi(u_j)}{1+m}=\frac{f(i)+\sum_{j=1}^{m}f(u_j)}{1+m} \;.
\end{align*}
Let $A\in\Real^{n\times n}$ be the adjacency matrix underlying graph $\G\in\Real^{n\times n}$ and $D$ be the diagonal matrix representing the degrees of the nodes in the graph. Let $P\in \Real^{n\times n}$ be the stochastic transition matrix defined by 
\[
P=(D+I)^{-1}(A+I) \;. 
\]
We observe that
\[
(Pf)(i) = \frac{f(i)+\sum_{j=1}^{n}f(u_j)}{1+n} \;.
\]
Therefore, if we approximate $\N$ by its first order expansion, 
\begin{align*}
\widehat f(., 0)=f\,, \qquad \widehat f(., t+1)= P \widehat f(.,t) \;.
\end{align*}
Applying $P$ to the function $f$ is equivalent to doing a random walk on the nodes of the graph according to $P$. Therefore, $\widehat f(., t+1)$ can be obtained by simulating a random walk of length $t$ and returning value of $f$ at the stopping point.  
\end{proof}

We summarize the optimization method as follows. Let $T$ be a sufficiently large number. First, we find the local minima of $g(., T)$. This is obtained by running random walks of length $T$. Then, starting at this local minima, we find the local minima of $g(., T-1)$. We continue this process until we find the local minima of $g(., 0)$, which is returned as the approximate minimizer of $f$. 
\fi

%%%%%%%%%%%%%%%%%%%%%%%%%%%%%%%
\section{More Details for Experiments}
\label{app:gnns}

First, we explain the graph-based nearest neighbor search.   
%First, we explain the construction of the data structure, and then we will discuss the search problem on this data structure. 
%We construct a proximity graph over training data. 
Let $N$ be a positive integer. Let $\G$ be a proximity graph constructed on dataset $\V$ in an offline phase, i.e. $\V$ is the set of nodes of $\G$, and each point in $\V$ is connected to its $N$ nearest neighbors with respect to some distance metric $\ell:\Real^d\times\Real^d\rightarrow\Real$. In our experiments, we use the Euclidean metric. Given the graph $\G$ and the query point $y\in\Real^d$, the problem is reduced to minimizing function $f = \ell(.,y)$ over a graph $\G$. The algorithm is shown in Figures~\ref{alg:opt-rr} and~\ref{alg:simulatedannealing}. The SGNN continues for a fixed number of iterations. In our experiments, we run the simulated annealing procedure for $\log n$ rounds, where $n$ is the size of the training set. See Figure~\ref{alg:simulatedannealing} for a pseudo-code. Finally, the SGNN runs the simulated annealing procedure several times and returns the best outcome of these runs. The resulting algorithm with random restarts is shown in Figure~\ref{alg:opt-rr}. The above algorithm returns an approximate nearest neighbor point. To find $K$ nearest neighbors for $K>1$, we simply return the best $K$ elements in the last line in Figures~\ref{alg:opt-rr}. 
We use $K=50$ approximate nearest neighbors to predict a class for each given query. We construct a directed graph $\G$ by connecting each node to its $N=30$ closest nodes in Euclidean distance. For smoothing, we tried random walks of length $T=1$ and $T=2$. This means that, to evaluate a node, we run a random walk of length $T$ from that node and return the observed value at the stopping point as an estimate of the value of the node. This operation smoothens the function, and generally improves the performance. The SGNN method with $T=1$ is denoted by SGNN(1), and SGNN with $T=0$, i.e. pure simulated annealing on the graph, is denoted by SGNN(0). For the SGNN algorithm, the number of rounds is $J=\log(\text{training size})$ in each restart. 

\begin{figure}
\begin{center}
\framebox{\parbox{12cm}{
\begin{algorithmic}
\STATE \textbf{Input: } Number of random restarts $I$, number of hill-climbing steps $J$, length of random walks $T$.
\STATE Initialize set $U=\{\}$
\FOR{$i:=1,\dots,I$}
\STATE Initialize $x = $ random point in $S$. 
\STATE $x' = \textsc{Smoothed-Simulated-Annealing}(x, T, J)$
\STATE $U=U\cup \{x'\}$
\ENDFOR
\STATE Return the best element in $U$
\end{algorithmic}
}}
\end{center}
\caption{The Optimization Method with Random Restarts}
\label{alg:opt-rr}
\end{figure}

\begin{figure}
\begin{center}
\framebox{\parbox{12cm}{
\begin{algorithmic}
\STATE \textbf{Input: } Starting point $x\in S$, number of hill-climbing steps $J$, length of random walks $T$. 
\FOR{$j:=1,\dots,J$}
\STATE Perform a random walk of length $T$ from $x$. Let $y$ be the stopping state.
\STATE Let $\widehat f(x, s) = f(y)$ 
\STATE Let $u$ be a neighbor of $x$ chosen uniformly at random.
\STATE Perform a random walk of length $T$ from $u$. Let $v$ be the stopping state.
\STATE Let $\widehat f(u, s) = f(v)$.
\IF{$f(v)\le f(y)$}
\STATE Update $x = u$  
\ELSE
\STATE Temperature $\tau=1-j/J$
\STATE With probability $e^{(f(y)-f(v))/\tau}$, update $x=u$
\ENDIF
\ENDFOR
\STATE Return $x$
\end{algorithmic}
}}
\end{center}
\caption{The \textsc{Smoothed-Simulated-Annealing} Subroutine}
\label{alg:simulatedannealing}
\end{figure}

The graph based nearest neighbor search has been studied by~\cite{Arya-Mount-1993, Brito-1997, Eppstein-1997, Miller-Teng-1997, Plaku-Kavraki-2007, Chen-Fang-Saad-2010, Connor-Kumar-2010, Dong-Charikar-Li-2011, Hajebi-2011, Wang-Wang-2012}. %\todoy{add the reference to the paper that adds long jumps} 
In the worst case, construction of the proximity graph has complexity $O(n^2)$, but this is an offline operation. %More efficient construction methods are available~\cite{?}. 
Choice of $N$ impacts the prediction accuracy and computation complexity; smaller $N$ means lighter training phase computation, and heavier test phase computation (as we need more random restarts to achieve a certain prediction accuracy). Having a very large $N$ will also make the test phase computation heavy.

We used the MNIST and COIL-100 datasets, that are standard datasets for image classification. The MNIST dataset is a black and white image dataset, consisting of 60000 training images and 10000 test images in 10 classes. Each image is $28\times 28$ pixels. The COIL-100 dataset is a colored image dataset, consisting of 100 objects, and 72 images of each object at every 5x angle. Each image is $128\times 128$ pixels, We use 80\% of images for training and 20\% of images for testing. 

For LSH and KDTree algorithms, we use the implemented methods in the scikit- learn library with the following parameters. For LSH, we use 
LSHForest with min hash match=4, \#candidates=50, \#estimators=50, \#neighbors=50, radius=1.0, radius cutoff ratio=0.9. For KDTree, we use leaf size=1 and $K$=50, meaning that indices of 50 closest neighbors are returned. The KDTree method always significantly outperforms LSH. For SGNN, we pick the number of restarts so that all methods have similar prediction accuracy.

Figure~\ref{fig:coil} (a-d) shows the accuracy of different methods on different portions of COIL-100 dataset. As the size of training set increases, the prediction accuracy of all methods improve. Figure~\ref{fig:coil} (e-h) shows that the test phase runtime of the SGNN method has a more modest growth for larger datasets. In contrast, KDTree becomes much slower for larger training datasets. When using all training data, the proposed method has roughly the same accuracy, while having less than 50\% of the test phase runtime of KDTree. % First experiment in the main body of the paper
Using the exact nearest neighbor search, we get the following prediction accuracy results (the error bands are 95\% bootstrapped confidence intervals): with full data, accuracy is $0.955\pm 0.01$; with 3/4 of data, accuracy is $0.951\pm 0.01$; with 1/2 of data, accuracy is $0.943\pm 0.01$; and with 1/4 of data, accuracy is $0.932\pm 0.01$.

\begin{figure*}
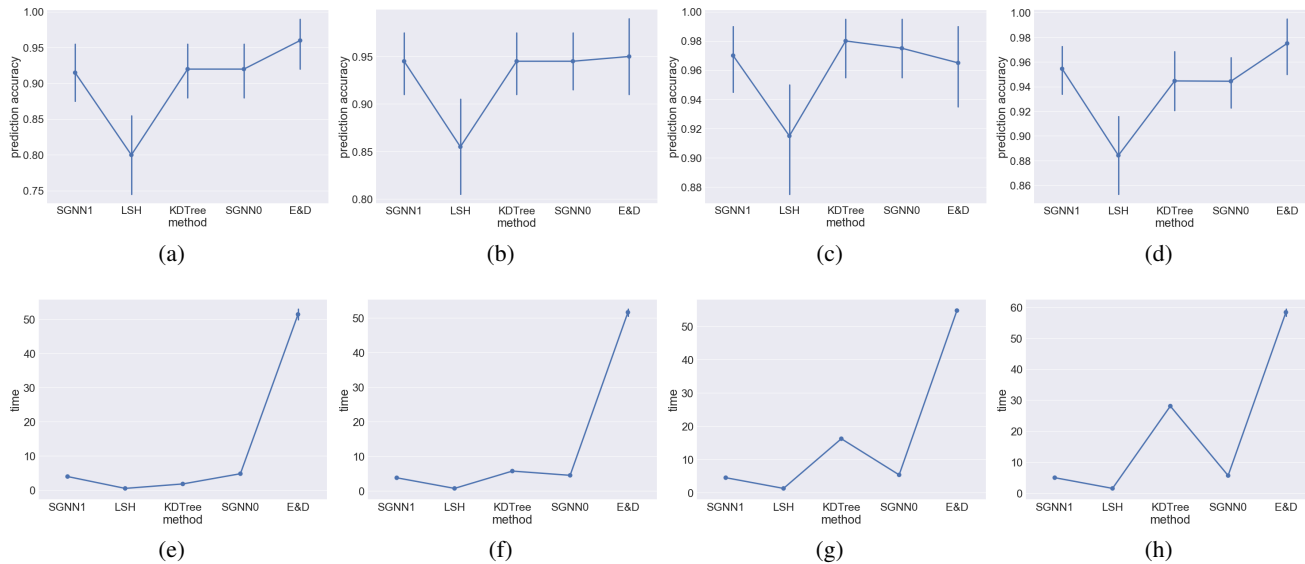
%[!h]
    \centering
    \subfloat[]{{\includegraphics[width=0.25\textwidth]{./Plots/25PercentAccuracy.png} 	}}
	\subfloat[]{{\includegraphics[width=0.25\textwidth]{./Plots/50PercentAccuracy.png} 	}} 	 
    \subfloat[]{{\includegraphics[width=0.25\textwidth]{./Plots/75PercentAccuracy.png} 	}} 
   \subfloat[]{{\includegraphics[width=0.25\textwidth]{./Plots/100PercentAccuracy.png} 	}} \\
    \subfloat[]{{\includegraphics[width=0.25\textwidth]{./Plots/25PercentTime.png} 	}}
	\subfloat[]{{\includegraphics[width=0.25\textwidth]{./Plots/50PercentTime.png} 	}} 	 
    \subfloat[]{{\includegraphics[width=0.25\textwidth]{./Plots/75PercentTime.png} 	}} 
   \subfloat[]{{\includegraphics[width=0.25\textwidth]{./Plots/100PercentTime.png} 	}} 
    \vspace{0cm}
    \caption{Prediction accuracy and running time of different methods on COIL-100 dataset as the size of training set increases. (a,e) Using 25\% of training data. (b,f) Using 50\% of training data. (c,g) Using 75\% of training data. (d,h) Using 100\% of training data.} \label{fig:coil}%
\end{figure*}

Next, we study how the performance of SGNN changes with the length of random walks. We choose $T=2$ and compare different methods on the same datasets. The results are shown in Figure~\ref{fig:longwalk}. The SGNN(2) method outperforms the competitors. Interestingly, SGNN(2) also outperforms the exact nearest neighbor algorithm on the MNIST dataset. This result might appear counter-intuitive, but we explain the result as follows. Given that we use a simple metric (Euclidean distance), the exact $K$-nearest neighbors are not necessarily appropriate candidates for making a prediction; Although the exact nearest neighbor algorithm finds the global minima, the neighbors of the global minima on the graph might have large values. On the other hand, the SGNN(2) method finds points that have small values and also have neighbors with small values. This stability acts as an implicit regularization in the SGNN(2) algorithm, leading to an improved performance. 

\begin{figure}%[t]%[!h]
    \centering
    \subfloat[]{{\includegraphics[width=0.25\textwidth]{./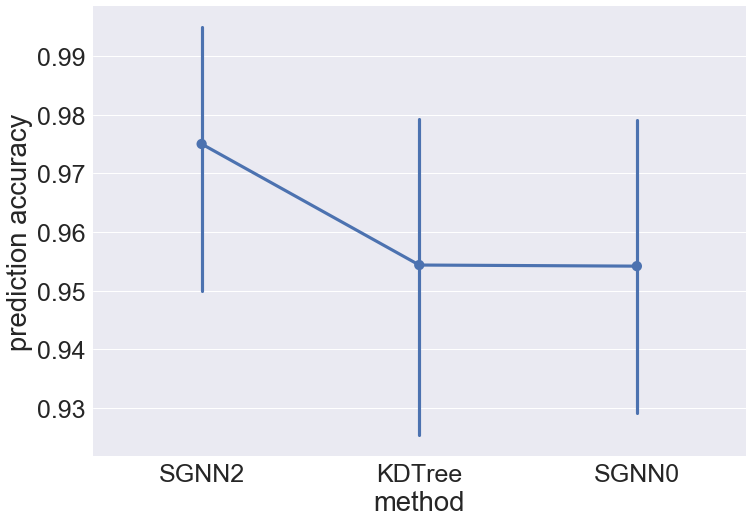} 	}}
	\subfloat[]{{\includegraphics[width=0.25\textwidth]{./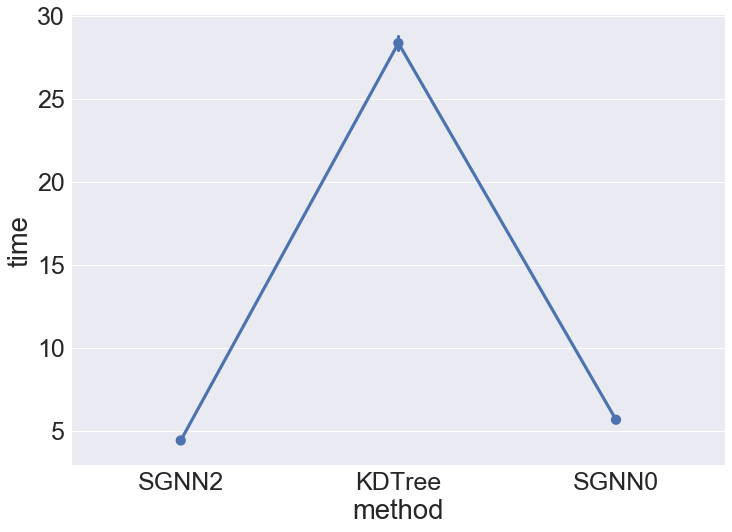} 	}} 	 
    \subfloat[]{{\includegraphics[width=0.25\textwidth]{./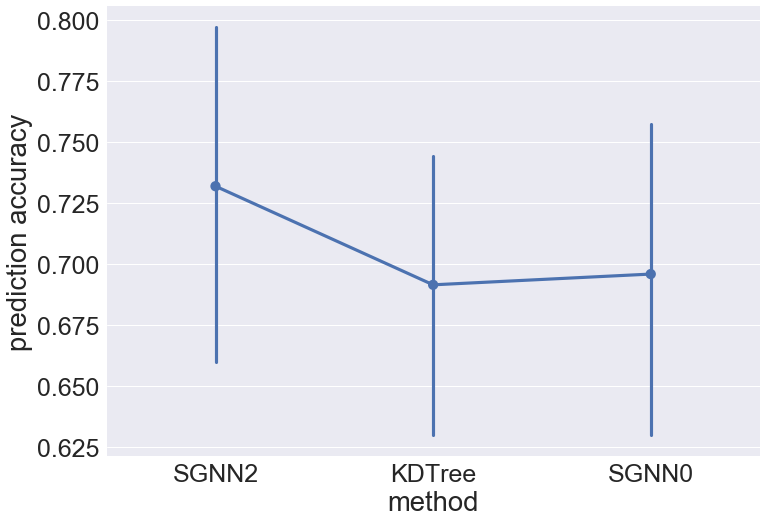} 	}} 
   \subfloat[]{{\includegraphics[width=0.25\textwidth]{./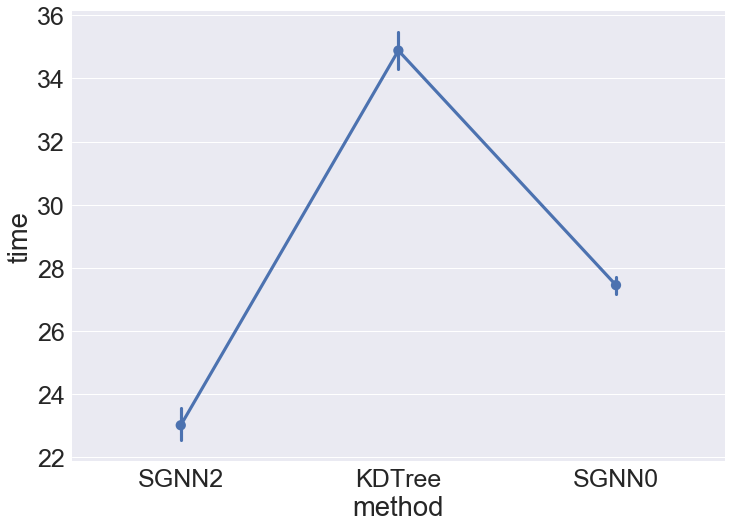} 	}} 
    \vspace{0cm}
    \caption{Prediction accuracy and running time of the SGNN method with random walks of length two (a) Accuracy on MNIST dataset using 100\% of training data. (b) Running time on MNIST dataset using 100\% of training data. (c) Accuracy on COIL-100 dataset using 100\% of training data. (d) Running time on COIL-100 dataset using 100\% of training data.} \label{fig:longwalk}%
\end{figure}

These results show the advantages of using graph-based nearest neighbor algorithms; as the size of training set increases, the proposed method is much faster than KDTree. %The proposed continuation method also outperforms simulated annealing, denoted by SGNN(0), in these experiments. The simulated annealing is more likely to get stuck in local minima, and thus it requires more random restarts to achieve an accuracy that is comparable with the accuracy of the continuation method. This explains the difference in the runtimes of SGNN(0) and SGNN(1).  

\end{document}